\newtheorem{lemma}{Lemma}
\newtheorem{theorem}{Theorem}
\newtheorem{proposition}{Proposition}
\newtheorem{assumption}{Assumption}
\newcommand{\rp}[1]{\left( #1 \right)}
\newcommand{\A}{\mathcal{A}}
\renewcommand{\S}{\mathcal{S}}
\newcommand{\expvalDist}[2]{\mathbb{E}_{#1} \left[ #2 \right]}
\newcommand{\icAlg}{DCC-QL }
\title{Multi-Agent Reinforcement Learning for Task Offloading in Wireless Edge Networks}
\date{}
\author{%
  Andrea Fox \\
  LIA, Avignon University\\
  Avignon, France \\
  \texttt{andrea.fox@univ-avignon.fr} \\
  \And
  Francesco De Pellegrini \\
  LIA, Avignon University\\
  Avignon, France \\
  \texttt{francesco.de-pellegrini@univ-avignon.fr} \\
  \And
  Eitan Altman \\
  INRIA, Sophia Antipolis, France \\
  \texttt{eitan.altman@inria.fr} \\
}
\begin{document}

\maketitle

\begin{abstract}
    In edge computing systems, autonomous agents must make fast local decisions while competing for shared resources\footnote{The code used in our experiments can be found at \url{https://github.com/Andrea-Fox/multiAgentTaskOffloading}.}\footnote{This work has been partially supported by the French National Research Agency (ANR) within the PARFAIT project (ANR-21-CE25-0013).}. Existing MARL methods often resume to centralized critics or frequent communication, which fail under limited observability and communication constraints. We propose a decentralized framework in which each agent solves a constrained Markov decision process (CMDP), coordinating implicitly through a shared constraint vector. For the specific case of offloading, e.g., constraints prevent overloading shared server resources. Coordination constraints are updated infrequently and act as a lightweight coordination mechanism. They enable agents to align with global resource usage objectives but require little direct communication. Using safe reinforcement learning, agents learn policies that meet both local and global goals. We establish theoretical guarantees under mild assumptions and validate our approach experimentally, showing improved performance over centralized and independent baselines, especially in large-scale settings.
\end{abstract}
\section{Introduction}
In decentralized systems, local decisions taken by agents collectively influence the outcome of other agents' actions or the availability of shared resources, with direct impact on individual performance. In mobile edge computing (MEC), for instance, this is the case of edge offloading techniques. They permit multiple devices to independently decide whether to offload computations to a shared edge server or to process them locally. But, while each agent operates based on local observations and limitations, such as battery level, workload, or latency requirements, the collective offloading decisions directly affect server responsiveness and network congestion. Actually, when too many agents offload simultaneously, the resulting overload can degrade performance across the system.
In such settings, each agent’s objective depends not only on its own decisions but also on the aggregated behavior of others, introducing a coordination challenge. This challenge is further amplified in environments with communication delays or asynchronous agent behavior, where real-time coordination is difficult or infeasible. Designing scalable learning methods that support implicit coordination under these system technical constraints is essential for efficient and robust distributed operation.

% Most existing multi-agent reinforcement learning (MARL) algorithms are ill-suited for this scenario. Approaches such as QMIX, VDN, MADDPG, MAPPO, and COMA rely on centralized critics, joint training, or synchronized updates—mechanisms that assume high-bandwidth, low-latency communication and consistent access to global state information. These assumptions do not hold in edge environments, where agents act asynchronously, observe only partial local states, and cannot afford frequent synchronization. As a result, centralized or communication-heavy MARL methods often suffer from degraded performance, delayed updates, or resource contention when deployed in such settings.
Many existing approaches to coordination in multi-agent reinforcement learning assume additive rewards or rely on shared incentives and communication protocols. However, in practical systems like edge computing and wireless access networks, agent interactions are tightly coupled through shared resources, and rewards are often non-additive due to congestion effects.
% Many of the works in the context of coordination in multi-agent reinforcement learning assume additive rewards or rely on shared incentives or communication protocols.
% Coordination in decentralized multi-agent reinforcement learning (MARL) has been studied in the context of potential games \cite{macglashan2022value} and communication-free learning \cite{zhang2018fully}. Many of these works assume additive rewards or rely on emergent coordination through shared incentives or communication protocols. 
% However, in practical systems such as edge computing and wireless access networks, agent interactions are tightly coupled through shared resources, and rewards are non-additive due to congestion effects. 
% Classical methods like MAPPO \cite{yu2021surprising} and IPPO \cite{de2020independent} fail to account for this structure, often leading to inefficient equilibria. 
% Recent approaches explore intrinsic motivation \cite{mckee2020social,mu2024multi} and influence-aware learning \cite{jaques2019social}, but typically rely on centralized critics or dense agent-agent communication.
To address these coordination challenges, we propose a decentralized learning framework based on independent constrained Markov decision processes (CMDPs). In this framework, coordination is achieved implicitly through shared constraints that are updated infrequently. These constraints regulate the maximum frequency at which each agent can offload tasks, effectively serving as a virtual coordination mechanism over shared resources. Each agent optimizes its own policy using techniques from constrained reinforcement learning, ensuring local autonomy while maintaining system-wide alignment through periodic, constraint-driven synchronization. This approach enables fast, scalable, and communication-efficient decision-making in distributed environments like edge computing, where centralized coordination is often impractical.

Our contributions are resumed hereafter:
\begin{itemize}
    \item We introduce DCC, a general decentralized reinforcement learning framework for multi-agent coordination under shared resource constraints; we apply it to the problem of task offloading in wireless edge computing systems. Each device (agent) solves a local CMDP, and coordination emerges implicitly through the shared constraint vector that regulates offloading behavior across the network.
    \item We provide a tractable approximation of the global objective via decomposition, establish theoretical guarantees on its validity under mild assumptions and error bounds in the nonlinear case.
    \item We validate the DCC framework through preliminary numerical experiments in toy environments. While limited in scope, our results highlight the scalability of the approach and the net improvement over centralized and independent baselines. They lay the groundwork for more extensive evaluations in future research.
\end{itemize}
% Moreover, it consistently outperforms algorithms that assume independent agents, even communication-heavy variants (IQL with common shared reward).

% This paper presents early-stage work that introduces the DCC framework and establish its theoretical basis, while validating its potential through preliminary experiments. A comprehensive empirical evaluation on realistic wireless testbeds is left to future work. % We hope this initial study will stimulate discussion and guide subsequent research on scalable, communication-efficient coordination mechanisms for wireless systems.

The paper is organized as follows: \cref{sec:related works} reviews related work, while \cref{sec:system model} introduces the Markov game and the system model under consideration. In \cref{sec:approximation optimal policy}, we present the proposed algorithm for the agents' constrained policy optimization. Numerical results are provided in \cref{sec:numerical results}, followed by concluding remarks in the final section.

\section{Related works}
\label{sec:related works}
Multi-agent reinforcement learning (MARL) has been extensively surveyed in \cite{hernandez2019survey, zhang2021multi, albrecht2024multi}. Centralized Training Distributed Execution (CTDE) approaches are the most widely used in MARL. For example, MADDPG \cite{lowe2017multi}, MAPPO \cite{yu2022surprising}, and COMA \cite{foerster2018counterfactual} employ centralized critics to guide training, while policies are executed in a decentralized manner. Value-decomposition methods such as VDN \cite{sunehag2017value} and QMIX \cite{rashid2020monotonic} improve scalability further by factorizing value functions, though they operate under an individual-global-max assumption, which does not hold in settings where agents compete for scarce resources. Independent learners such as IQL \cite{tan1993multi} and IPPO \cite{de2020independent} avoid centralization entirely, but struggle to coordinate in environments with interdependent rewards.

Constrained reinforcement learning (CRL) extends standard RL by requiring policies to satisfy long-term constraints in addition to maximizing rewards. A variety of approaches have been proposed to solve constrained Markov decision processes (CMDPs) \cite{gu2022review,liu2021policy}. Primal-dual methods, such as RCPO \cite{tessler2018reward}, optimize reward and constraints concurrently, while value-based methods \cite{bohez2019value} adapt Q-learning to the constrained setting. CPO \cite{achiam2017constrained} formulates the problem as a constrained trust-region update, offering guarantees but incurring high computational cost. IPO \cite{liu2020ipo} introduces a first-order method using barrier functions, and PCPO \cite{yang2020projection} projects policies onto the feasible set after unconstrained optimization.

% While these methods address single-agent CMDPs, they do not extend naturally to multi-agent settings with shared resource coupling. Our work differs by modeling each agent as a constrained learner and enabling decentralized coordination through shared constraints, without requiring global state or communication.

While reinforcement learning has been widely applied to MEC offloading problems—typically optimizing task latency or energy consumption using single-agent algorithms such as DQN \cite{li2018deep} and DDQN \cite{tang2020deep}—multi-agent settings remain comparatively underexplored. Some recent works integrate MARL into this domain: for example, \cite{gan2022multi} combines QMIX with DQN to jointly minimize average delay and energy cost, while \cite{huang2021multi} employs decentralized actor–critic agents for user-level offloading decisions. However, these approaches either rely on centralized training, assume frequent inter-agent communication, or do not explicitly address coordination under shared resource constraints. In contrast, our method formulates offloading as a set of independent constrained MDPs, where coordination emerges implicitly via infrequent updates to shared constraints, enabling scalable, communication-efficient learning tailored to congestible wireless environments.

\section{System model}
\label{sec:system model}
% \subsection{Task Offloading in Wireless Edge Systems}
\label{subsec: task offloading description}
In wireless edge computing, a collection of mobile devices must decide at each time step whether to execute computational tasks locally, offload them to a shared edge server, or defer processing altogether. Each device operates independently based on its local state, which may include factors such as task backlog, latency sensitivity, energy harvesting rate, local processing cost, the time elapsed since the last data processing, and battery level.
The wireless channel and the edge server represent shared, capacity-limited resources. When too many devices offload simultaneously, contention and server overload degrade performance for all users. To mitigate this, we impose a constraint on the long-term frequency with which each agent may offload, serving as a virtual coordination mechanism. This constraint {\em does not} reflect a physical device limitation, but rather a system-level policy that governs fair and efficient resource usage.

Each device is modeled as an agent solving a local constrained MDP (CMDP) \cite{altman2021constrained}, optimizing its policy to balance local performance objectives with the global constraint on offloading frequency. Coordination emerges implicitly as agents adapt their policies based on local observations and periodic updates to the shared constraint, enabling decentralized yet system-aware behavior.

\subsection{Markovian formulation}
We model the system described in \cref{subsec: task offloading description} as a decentralized multi-agent problem in which $N$ agents must coordinate their actions to minimize a global objective function. The joint state of the system is denoted by $s = (s_1, \dots, s_N) \in \mathcal{S}$, where $s_i$ is the local observation of agent $i$. Each agent selects an action $a_i$ from its individual action space, and the joint action is $a = (a_1, \dots, a_N) \in \mathcal{A}$.

We assume independent transition dynamics, i.e., the transition probability of the system factorizes across agents as $p(s' \mid s, a) = \prod_{i=1}^N p_i(s_i' \mid s_i, a_i)$, where $s' = (s_1', \dots, s_N')$ is the next state, and $p_i$ denotes the local transition kernel for agent $i$.

Each agent’s reward consists of a local component and a global component that depends on the actions of the other agents. This formulation is sufficiently general to model a broad range of mobile edge computing applications.
\begin{equation}
    r(s, a) = \sum_{i=1}^N r_i(s_i, a) = \sum_{i=1}^N u_i(s_i) + \mathbb{I}[a_i = a_{\text{crowd}}] \cdot d\left(N(a)\right)
\label{eq:immediate reward system}
\end{equation}
where $u_i: \mathcal{S}_i \to \mathbb{R}$ describes the local utility function for agent $i$, $a_{\text{crowd}} \in \mathcal{A}_i$ is a designated \textit{crowded action}, corresponding to offloading the task to the edge computer, $N(a) = \sum_{j=1}^N \mathbb{I}[a_j = a_{\text{crowd}}]$ is the number of agents choosing the crowded action and $d: [1, \infty) \to \mathbb{R}$ is a differentiable function with $d(1) = 0 , d'(n) > 0$, and $d''(n)$ exists and does not change sign.\\
The function $d(n)$ models congestion at the edge: the more agents select the crowd action, the greater the penalty each receives for choosing it. This structure induces coordination challenges reminiscent of \textit{crowding games} or \textit{singleton congestion games}, where agents compete over a shared, congestible resource. This additive form preserves the separability of local rewards while explicitly modeling the coupling from congestion through $d(N(a))$, which captures the performance degradation when many agents offload simultaneously.

We assume the following misalignment between individual and collective incentives:

\begin{assumption}[Individual Incentive for Crowd Action]
Let $a \in \mathcal{A}$ be such that $N(a) = 1$ and $a_i = a_{\text{crowd}}$, i.e., only agent $i$ selects the crowd action. Let $a' \in \mathcal{A}$ be identical to $a$ except $a_i' \neq a_{\text{crowd}}$, with $a_j' = a_j$ for all $j \neq i$. Then, for all states $s$:
\begin{equation}
r(s, a) < r(s, a').
\end{equation}
\label{assumption:individual incentive crowded action}
\end{assumption}
That is, the crowd action appears individually beneficial when selected in isolation. However, this creates a coordination dilemma: if every agent independently learns to select the crowd action, the shared penalty $d(N(a))$ increases, degrading the overall system performance. This structure highlights the need for coordination-aware learning methods beyond naive reward minimization.

The objective function we want to minimize is 
\begin{equation}
    J(\pi, \beta)  =   \expvalDist{a \sim \pi(s)}{\sum_{t=0}^\infty \gamma^t r(s_t, a_t) \mid s_0 \sim \beta}
    \label{eq:objective function}
\end{equation}
% \eqref{eq:objective function} defines the optimal expected return starting from an initial state distribution $\beta$, optimized over a joint policy $\pi$. 
While the dynamics are independent across agents, the reward function introduces a critical coupling through the crowd-dependent term $d(N(a))$, which depends on the number of agents selecting a specific action $a_{\text{crowd}}$. This structure creates several challenges in analyzing or optimizing $J(\beta)$:
\begin{itemize}
    \item \textbf{Non-decomposability:} The reward $r(s, a)$ is not additive across agents, as the congestion penalty depends on the global joint action $a$. As a result, standard decomposition techniques used in multi-agent systems with independent rewards (e.g., independent learners) are not applicable.
    \item \textbf{Coupled incentives:} Even if agents act independently, their optimal behavior is interdependent due to the shared penalty. The optimal joint policy may require implicit coordination to avoid overcrowding the sensitive action, a behavior that cannot be captured by decentralized greedy learners.
    \item \textbf{Noisy reward:} Since the congestion term depends on how many agents select $a_{\text{crowd}}$, individual agents receive noisy or misleading feedback about their own contribution to the reward. This complicates both policy gradient and value-based methods.
    \item \textbf{Non-stationarity in decentralized learning:} In decentralized training, each agent’s policy evolves independently. Coupling in the reward makes the environment non-stationary from each agent’s perspective, even though the transition dynamics are stationary.
\end{itemize}

These properties position the problem beyond standard cooperative multi-agent reinforcement learning (MARL) settings with additive rewards or shared goals. In the next section, we present a learning framework designed to address these challenges. % In particular, naive learning algorithms that do not account for the crowd-induced coupling may converge to highly suboptimal equilibria.

\section{Using MARL for optimizing offloading}
\label{sec:approximation optimal policy}
We now introduce the \textbf{DCC framework} (Decentralized Coordination via CMDPs), a general structure for multi-agent reinforcement learning in shared-resource environments. DCC enables scalable coordination among agents by combining three key ingredients:
\begin{enumerate}
    \item \textbf{Lightweight Communication:} Agents operate independently, without exchanging real-time state or action information. Inter-agent communications do occur, but only infrequently and to share scalar constraint variables, which enables coordination.
    \item \textbf{Constraint-Based Coupling:} Each agent solves a constrained MDP in which the constraint controls the crowded action. The update of the constraint variable serves as a coordination signal across agents.
    \item \textbf{3: System-Level Alignment:} Optimizing the constraint vector steers individual agent behavior toward global objectives, without requiring centralized control or synchronous communication.
\end{enumerate}

We remark that DCC is not a single algorithm but a general \emph{framework} compatible with a broad class of reinforcement learning methods for local MDPs, including value-based and policy-gradient approaches. We begin by providing a high-level intuition of the proposed algorithm. A detailed description of each component is then developed in \cref{subsec:reward approximation,subsec:CMDP single agent,subsec:finding optimal constrained policy,subsec:optimization of constraints,subsec:efficient computation of the gradient}, including the underlying reward approximation, the CMDP formulation, and the multi-timescale optimization process.

To approximate the global objective \eqref{eq:objective function}, we reformulate it as a sum of local value functions constrained by agent-specific action frequencies:
\begin{equation}
    J(\pi_{global}^\star, \beta)  \approx \inf_\theta \sum_{i=1}^N J_i(\pi^\star_i(\theta_i), \beta_i)
    \label{eq:desired approximation}
\end{equation}
Each agent $i$ solves its own CMDP, where the constraint $\theta_i$ encodes how often it can select the crowded action. This decomposition decouples learning while preserving system-level coordination through optimization of the shared constraint vector $\theta$.

To facilitate the development of a three-timescale learning algorithm, we define an approximate reward function: it replaces the actual number of agents which choose the congested action with the expected value. It corresponds to number of agents who select the crowded action according to their local constraint. This approximation removes the direct dependency on joint actions: each agent optimizes its policy independently and yet full constraint vector $\theta$ accounts for the aggregate system behavior.

Our algorithm uses a three-timescale learning process to solve the approximation in \eqref{eq:desired approximation}:
\begin{itemize}
    \item \textbf{Fast and Intermediate Timescales:} For a fixed constraint vector $\theta$, each agent independently optimizes its own policy by solving a local CMDP using a Lagrangian-based safe reinforcement learning approach. On the fastest timescale, the policy is updated using a shaped reward of the form $r_i^{\text{tot}}(s, a) = r_i(s_i, a) + \lambda_i c_i(s_i, a_i)$, where $c_i$ denotes the cost associated with selecting the crowded action. Since both $\theta$ and $\lambda_i$ are held fixed during this phase, \textit{any standard reinforcement learning algorithm}, such as PPO, DQN, or even Q-learning if the system is small enough, can be used to optimize the local policy. On the intermediate timescale, the Lagrange multiplier $\lambda_i$ is updated via a primal-dual method to enforce long-term satisfaction of the constraint.
    \item \textbf{Slow Timescale:} At the slowest time-scale, the constraint vector $\theta$ is optimized to improve global coordination. Instead of heuristically selecting $\theta$, we treat it as a coordination variable and optimize it via stochastic approximation. Importantly, the structure of the objective allows each component of the gradient $\nabla_\theta J$ to be estimated using just three local policy evaluations per agent, regardless of the total number of agents. This property ensures that the algorithm remains scalable and efficient, even in large systems.
\end{itemize}

While constraint updates are performed synchronously across agents, they occur at a much slower timescale and can be implemented with minimal coordination overhead. In practice, such updates are orders of magnitude less frequent than policy learning steps, preserving the algorithm's scalability and decentralization. Notably, the computational complexity of DCC is similar to the one of the RL algorithm chosen, with the primary difference being the computation of the gradient—an operation that is computationally inexpensive.

Together, these components yield a principled and practical algorithm for decentralized coordination under shared resource constraints, with theoretical support for its approximation quality and optimization dynamics.

% The remainder of this section provides a detailed formulation of the learning framework, starting with the construction of the local CMDPs.

\begin{algorithm}[t]
\caption{DCC Framework (Simplified Overview)}
\label{alg:dcc_simplified}
\begin{algorithmic}[1]
\State \textbf{Initialize:} constraint vector $\theta^0$, multipliers $\lambda_i^0$, policies $\pi_i^0$ for each agent $i$
\Function{OptimizeLocalCMDP}{$i, \theta, n$}
    \For{$m = 0, 1, \dots, M(n)$} \Comment{Intermediate timescale: Lagrange multiplier update}
        \For{$t = 0, 1, \dots, T(m, n)$} \Comment{Fast timescale: policy optimization}
            \State Update policy $\pi_i$ via RL step on shaped reward $r_i^{\text{tot}}$
        \EndFor
        \State Update multiplier $\lambda_i$
    \EndFor
    \State $\hat{J}_i (\theta) \gets $ evaluate final policy $\pi_i^\star$ 
    \State \Return $\hat{J}_i (\theta)$, final policy $\pi_i^\star$, multiplier $\lambda_i^\star$
\EndFunction
\\\\
\For{$n = 0, 1, 2, \dots$} \Comment{Slowest timescale: constraint optimization}
    \For{each agent $i = 1, \dots, N$}
        \State $\hat{J}_i (\theta), \pi_i^\star, \lambda_i^\star \gets$ \Call{OptimizeLocalCMDP}{$i, \theta^n, n$}
        \State Estimate local gradient $\nabla_{\theta} \hat{J}_i(\theta^n)$
    \EndFor
    \State Update shared constraint vector: $\theta^{n+1} \gets \theta^n + \eta_n \nabla \hat{J}(\theta^n)$
\EndFor
\end{algorithmic}
\end{algorithm}

A simplified pseudocode for DCC is found in \cref{alg:dcc_simplified}, while the full pseudocode in in \cref{secApp:extended algorithm}.
In the remainder of the section, we investigate more deeply the DCC framework through the following steps:
\begin{itemize}
    \item Introduction of a decomposable approximation of the immediate reward (\cref{subsec:reward approximation}):
    we propose a decomposable approximation of the immediate reward that approximates the long-term reward with provably bounded approximation error—and exact equivalence when the congestion function is linear.
    \item CMDP of the system (\cref{subsec:CMDP single agent}): we introduce the single agent CMDP that will by used by the three timescale algorithm
    \item Computing the optimal constrained policy (\cref{subsec:finding optimal constrained policy}): We describe the process of computing the optimal constrained policy for a single agent, given a fixed value of the constraint.
    \item Differentiability of the objective (\cref{subsec:optimization of constraints}): We establish that the long term reward of each agent is differentiable with respect to $\theta$, which allows us to efficiently identify the minimum in \eqref{eq:desired approximation}
    \item Efficient computation of the gradient (\cref{subsec:efficient computation of the gradient}): We show how the  structure of the gradient can be leveraged to significantly accelerate the computation of the infimum of the objective function.

\end{itemize}

% In future work, we aim to explore fully asynchronous versions of the constraint optimization step, further reducing the need for coordination and improving robustness in dynamic network settings.
\subsection{Approximating the objective function via decomposition}
\label{subsec:reward approximation}
In a general setting general, the value function decomposition is a viable alternative when
$$r(s, a) = \sum_i r_i(s_i, a_i)$$
meaning that each agent can act independently while still achieving the globally optimal outcome \cite{macglashan2022value,russell2003q}. However, in the setting studied in this work, this condition does not hold due to a coupling term that depends on how many agents select the \emph{crowded} action. Specifically, the global reward takes the form
\begin{equation}
  r(s, a = \pi_{\text{global}}(s)) = \sum_i f_i(s_i) + \mathbb{I}_{a = a_{\text{crowd}}}(a_i) \cdot d\left( N(a) \right)
  \label{eq:global reward}
\end{equation}

where \( N(a) \) denotes the number of agents that choose the crowded action under $\pi_{\text{global}}$. (as defined in \cref{sec:system model}).

To address the challenge of decomposing this reward function, we propose a new (approximated) formulation of the reward :
\begin{align}
    \hat{r}(s, a; \theta) &= \sum_i u_i(s_i) + \mathbb{I}_{a = a_{\text{crowd}}}(a_i) \cdot d\left(1 + \sum_{j \neq i} \theta_j\right) \label{eq:approximated reward} \\
    &= \sum_i \hat{r}_i(s_i, a_i; \theta_{-i}), \notag
\end{align}
where $\theta_j$ denotes the average constraint value for agent $j$, i.e. the expected frequency with which agent $j$ selects the crowded action, and $\theta_{-i}  = \sum_{j \neq i} \theta_j$.\\
Intuitively, this approximation replaces the randomness in the number of offloading agents ($d(N)$) with its expectation ($\theta$), which reduces variance in the reward signal while still capturing the average congestion effect. This makes the problem more tractable for decentralized learning, while preserving the essential trade-offs.

This amounts to treating the policies of other agents (which are unknown to any given agent) as if they select the congested action as frequently as their individual constraints permit. Although this may not strictly hold in practice, it is justified by Assumption 1, which states that the congested action yields the highest reward when chosen in isolation—implying that agents are likely to fully utilize their constraints.

This approximation brings two key benefits for decentralized learning. First, by replacing the actual number of agents choosing the crowded action with its expected value under the constraint vector $\theta$, it removes the non-stationarity typically introduced by inter-agent coupling. As a result, each agent's reward depends only on its local state, action, and the fixed parameter $\theta$, effectively decoupling the learning dynamics across agents during the fast timescale. Second, the use of an expected congestion term reduces the variance of the reward signal each agent observes, since it no longer depends on the stochastic actions of others. This lower-variance signal leads to more stable updates and can accelerate policy learning.
Finally, in practical implementations such as edge computing, this formulation allows an agent to compute its reward without waiting for real-time feedback on how many others offloaded in the same timestep. While this does not reduce communication during deployment, it simplifies both training and simulation, and avoids introducing extra delays into the reward computation.

\paragraph{Error Bound for the Reward Approximation}
The following result gives an indication of the error that one could have when considering the approximated definition of the reward in \eqref{eq:approximated reward}. 
In particular, we define $J(\pi, \beta)$ the discounted reward associated to a policy $\pi$ when considering the reward $r$ and $\hat{J}(\pi, \beta)$ the one obtained when considering the approximated reward $\hat{r}$ defined in \eqref{eq:approximated reward}. 
\begin{lemma}
\label{lemma:bound approximation reward}
    Given a global policy $\pi$ and a vector $\theta \in \mathbb{R}^N$ such that
$$
\mathbb{E}_{a \sim \pi} \left[ N_{i,t}(a) \right] = \theta_i,
$$
where $N_{i}(a)$ denotes the random variable representing the frequency with which agent $i$ selects the crowded action at time $t$, it is verified that for a non linear penalty function $d$ 
% $$\mid R_\pi(\beta) - \hat{R}_\pi(\beta) \mid \leq \frac{1}{(1-\gamma)} \rp{d(N_{agents}) - 2 d\rp{\frac{N_{agents}+1}{2}}}$$\\
$$\mid J(\pi, \beta) - \hat{J}(\pi, \beta) \mid \leq \frac{1}{1-\gamma} \sum_i \theta_i \rp{\frac{\theta_{-i}}{N_{agents}-1} d(N_{agents}) - d(1 + \theta_{-i})} $$

Moreover, if  $d$ is linear, the two reward values coincide exactly, and the approximation becomes exact.
\end{lemma}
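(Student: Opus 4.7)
The plan is to control $|J_\pi(\beta) - \tilde{J}_\pi(\beta)|$ via the per-timestep reward gap and then sum a geometric series. Starting from the definitions in \eqref{eq:global reward} and \eqref{eq:approximated reward}, the difference collapses to
\begin{equation*}
r(s,a) - \hat r(s,a;\theta) = \sum_{i=1}^{N} \mathbb{I}[a_i = a_{\text{crowd}}] \left( d(N(a)) - d(1 + \theta_{-i}) \right).
\end{equation*}
Taking the triangle inequality inside the discounted sum gives $|J_\pi - \tilde J_\pi| \leq \sum_t \gamma^t \, |\mathbb{E}[r_t - \hat r_t]|$, so it suffices to bound $|\mathbb{E}[r_t - \hat r_t]|$ uniformly by $\sum_i \theta_i\bigl(\tfrac{\theta_{-i}}{N_{\text{agents}}-1} d(N_{\text{agents}}) - d(1+\theta_{-i})\bigr)$ and then multiply by $\sum_t \gamma^t = \frac{1}{1-\gamma}$.

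The central analytical step is to evaluate $\mathbb{E}[\mathbb{I}[a_i=a_{\text{crowd}}] \, d(N(a))]$. I will invoke the factorization of decentralized policies $\pi(a\mid s)=\prod_i \pi_i(a_i\mid s_i)$ together with the independent transition dynamics assumed in \Cref{sec:system model}; this makes the indicator $\mathbb{I}[a_i=a_{\text{crowd}}]$ independent of the residual count $X_i \coloneqq \sum_{j\neq i}\mathbb{I}[a_j=a_{\text{crowd}}]$ at each timestep, so that the expectation splits as $\theta_i \cdot \mathbb{E}[d(1+X_i)]$. Subtracting the approximated piece $\theta_i \, d(1+\theta_{-i})$ reduces the per-agent gap to $\theta_i\,\bigl(\mathbb{E}[d(1+X_i)] - d(1+\theta_{-i})\bigr)$, with $\mathbb{E}[X_i]=\theta_{-i}$ by the hypothesis of the lemma.

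The remaining task is to bound $\mathbb{E}[d(1+X_i)] - d(1+\theta_{-i})$ using the curvature hypothesis on $d$. Assume first $d''\geq 0$ (the $d''\leq 0$ case is symmetric and produces the same absolute-value bound). Jensen's inequality yields $\mathbb{E}[d(1+X_i)] \geq d(1+\theta_{-i})$, providing nonnegativity. For the upper bound, the chord inequality for convex functions applied to the interval $[1,N_{\text{agents}}]$ with endpoints $d(1)=0$ and $d(N_{\text{agents}})$ gives, pointwise,
\begin{equation*}
d(1+X_i) \leq \frac{X_i}{N_{\text{agents}}-1}\, d(N_{\text{agents}}),
\end{equation*}
and taking expectations turns the right-hand side into $\tfrac{\theta_{-i}}{N_{\text{agents}}-1} d(N_{\text{agents}})$. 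Combining with the Jensen lower bound and summing over $i$ and over the discounted horizon yields the claimed bound. For the linear case $d(n)=\alpha(n-1)$, the chord inequality and Jensen both become equalities, so $\mathbb{E}[d(1+X_i)]=d(1+\theta_{-i})$, every per-step gap is zero, and $J_\pi(\beta)=\tilde J_\pi(\beta)$ exactly.

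The step I expect to require the most care is the independence argument used to factor $\mathbb{E}[\mathbb{I}[a_i=a_{\text{crowd}}]\,d(N(a))]$: it hinges on the decentralized policy structure and on the independence of states across agents under the factorized transition kernel, which together make same-time actions $(a_i,a_j)_{i\neq j}$ product-distributed. This will need to be stated explicitly, otherwise correlated action profiles could inflate $\mathbb{E}[\mathbb{I}[a_i=a_{\text{crowd}}] X_i]$ beyond $\theta_i \theta_{-i}$ and invalidate the decomposition. Everything else reduces to standard convexity bookkeeping and a single application of the geometric-sum identity.
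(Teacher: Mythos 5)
Your proof is correct and follows essentially the same route as the paper's: decompose the gap into per-step differences, apply the chord (secant) inequality for convex $d$ on $[1, N_{\text{agents}}]$ using $d(1)=0$, pair it with Jensen for the opposite bound, sum the geometric series, and observe both inequalities become equalities when $d$ is linear. The only difference is that you make explicit the independence/product structure needed to replace $\mathbb{E}[N(a)\mid a_i = a_{\text{crowd}}]$ by $1+\theta_{-i}$, a step the paper performs silently via conditional expectation.
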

\begin{proof}
    See \cref{subsec:proof bound approximation lemma}.
\end{proof}

This leads to the following result, which shows that, under certain conditions on the immediate reward function, the approximation in~\eqref{eq:desired approximation} holds exactly. As a result, the decomposition is valid, and the algorithm proposed in this work can reliably recover the optimal policy. Importantly, this also establishes clear conditions under which the method is guaranteed to be optimal.

\begin{proposition}
\label{prop:condition for finding exact optimal policy}
    Let assume we know $\theta^* \in \mathbb{R}^N$ such that $\expvalDist{a \sim \pi^*}{N_{i, t}(a_t)} = \theta_i^* \ \forall i$, for the optimal global policy $\pi^*$ and assume that $d(\cdot)$ is linear. Then the value function does not depend on the type of reward that we consider and an optimal global policy can be obtained as the combination of the appropriate optimal local policies.
\end{proposition}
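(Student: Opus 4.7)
The plan is to combine Lemma 1 with the full factorization of the surrogate problem that emerges when $d$ is linear. First, I would invoke Lemma 1 for the globally optimal policy $\pi^*$ together with the vector $\theta^*$ that, by hypothesis, matches its per-agent expected crowd frequencies. Because $d$ is linear, the upper bound of Lemma 1 collapses to zero and yields the exact equality $J_{\pi^*}(\beta) = \tilde{J}_{\pi^*}(\beta)$; the same reasoning applied to any policy whose marginal frequencies agree with $\theta^*$ gives equality of the two return functionals on that class of policies. This already settles the first half of the statement: the value function does not depend on which of the two reward definitions is used.

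Next, I would exploit the fact that under $\hat{r}(\cdot;\theta^*)$ the stage reward splits as $\sum_i \hat{r}_i(s_i, a_i; \theta_{-i}^*)$, each summand depending only on the local state-action pair of agent $i$. Combined with the independent transition kernel $p(s'\mid s,a) = \prod_i p_i(s_i'\mid s_i,a_i)$ assumed in Section 3, this implies that for any product policy $\pi = (\pi_1,\dots,\pi_N)$ the surrogate return decomposes into a sum of purely local returns, $\tilde{J}_\pi(\beta) = \sum_i J_i(\pi_i,\beta_i)$. Maximizing $\tilde{J}_\pi(\beta)$ subject to the per-agent frequency constraint $\mathbb{E}_{\pi_i}[N_i] = \theta_i^*$ then reduces to $N$ decoupled local CMDPs. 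Denoting by $\pi_i^\star$ their solutions, the product policy $(\pi_1^\star,\dots,\pi_N^\star)$ is feasible by construction and attains $\sum_i J_i(\pi_i^\star,\beta_i) \geq \tilde{J}_{\pi^*}(\beta) = J_{\pi^*}(\beta)$, while the reverse inequality is immediate from the global optimality of $\pi^*$. The combined local optima therefore realize the global optimum, proving the second half of the statement.

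The step I expect to be the main obstacle is the reduction from joint to product policies: a priori the global optimum $\pi^*$ could be a centralized policy correlating actions across agents, whereas the decomposition above is phrased for product policies. I would resolve this by noting that, with $\hat{r}(\cdot;\theta^*)$ as reward and $\prod_i p_i$ as transition kernel, the Bellman operator of the surrogate CMDP factorizes agent by agent, so standard dynamic-programming arguments yield a deterministic product policy that is optimal for the surrogate problem under the per-agent constraints $\theta_i^*$. Applying Lemma 1 once more to this product policy transfers its optimality to the original reward, after which the chain of equalities and inequalities in the previous paragraph closes the proof with no further effort.
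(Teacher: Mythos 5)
Your proposal is correct and follows essentially the same route as the paper: invoke Lemma~\ref{lemma:bound approximation reward} to get exact invariance of the return under the linear congestion function, then use the separability of $\hat{r}(\cdot;\theta^*)$ together with the product transition kernel to reduce the surrogate problem to $N$ decoupled local CMDPs whose combined solution matches the global optimum. The only difference is one of rigor: you make explicit the two-sided inequality and the reduction from correlated joint policies to product policies (and, by phrasing the local constraints as equalities $\mathbb{E}_{\pi_i}[N_i]=\theta_i^*$, you ensure the composed policy's frequencies actually match $\theta^*$ so the invariance applies to it), whereas the paper's proof asserts these steps without elaboration.
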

\begin{proof}
From \cref{lemma:bound approximation reward}, we know that if the function $d$ is linear, then the discounted reward of a policy is invariant under the choice between the original reward $r$ and the approximated reward $\hat{r}$. Assuming the optimal constraint values $\theta^*$ are known, we can compute the locally optimal policy for each agent using $\hat{r}$. A policy composed by combining these local policies, denoted as $\pi^*_{\text{composed}}$, is then a solution to the global minimization problem with respect to $\hat{r}$, and by the invariance, also with respect to $r$. This leads to the conclusion:
$$J(\pi_{\text{composed}}, \beta) = \hat{J}(\pi_{\text{composed}}, \beta) = \sum_i \hat{J}_i(\pi_i^\star(\theta^*), \beta) $$
\end{proof}
% This result shows that, under certain conditions on the reward function, optimizing the global policy reduces to identifying the optimal values of the "virtual constraints" $\theta$, followed by computing the corresponding local constrained policies.

\subsection{Constrained MDP for Each Agent}
\label{subsec:CMDP single agent}

As introduced in \cref{eq:desired approximation}, we approximate the global objective by decomposing it into a collection of independent Constrained Markov Decision Processes (CMDPs), one per agent. Constrained Markov Decision Process (CMDP) \cite{altman2021constrained} extends the standard MDP by incorporating constraints on long-term cost signals. In this section, we describe the structure of the CMDP associated with each agent $i$.

\paragraph{State space}
The state space $\mathcal{S}_i$ for agent $i$ consists of its local observation $s_i$, such that the full joint state is $(s_1, \dots, s_N) \in \mathcal{S}$.

In the task offloading scenario considered, each agent's state is given by $s_i = (x_i, e_i)$, where $x_i$ denotes the \emph{Age of Information}, i.e., the number of timesteps since the last data processing was completed, and $e_i$ represents the current energy level of the agent's battery.

\paragraph{Action space}
The action space $\mathcal{A}_i(s_i)$ includes a designated \emph{crowded action}, $a_{\text{crowd}}$, which represents use of a shared resource. All other actions are independent of shared usage. 

For example, in the task offloading scenario described in \cref{sec:system model}, we have $\mathcal{A}_i(s_i) = \{ \text{``wait"}, \text{``local processing"}, \text{``offload"} \}$, where ``offload" corresponds to $a_{\text{crowd}}$.

\paragraph{Reward function}
The agent receives a reward according to the approximated formulation described in \cref{subsec:reward approximation}, where the influence of other agents is captured through the fixed (in the slow and intermediate timescale) parameter $\theta_{-i}$:
\[
\hat{r}_i(s, a; \theta_{-i}) = u_i(s_i) + \mathbb{I}_{a_i = a_{\text{crowd}}} \cdot d\left(1 + \theta_{-i}\right).
\]

\paragraph{Cost function}
The cost signal $c_i(s_i, a_i)$ is a naive function active only when the shared resource is used:
\begin{equation}
    c_i (s_i, a_i) = \begin{cases*}
        1, & if $a_i = a_{\text{crowd}}$ \\
        0, & otherwise
    \end{cases*}
\end{equation}

\paragraph{Constraint}
The constraint is defined by the parameter $\theta_i$, which limits the long-term frequency with which agent $i$ can select the crowded action.

\vspace{1em}
While our formulation is based on the discounted reward criterion, the DCC-RL framework is not inherently restricted to this setting. In principle, the same decomposition and coordination structure can be applied in the average reward case, with appropriate adjustments to the underlying reinforcement learning algorithm. The choice between average and discounted formulations depends primarily on the application context and the algorithm used to solve each agent’s constrained MDP.  In the discounted case, we define the discounted reward and discounted cost under policy $\pi_i$ and initial state distribution $\beta_i$ as follows:
\begin{align}
    J_i(\pi_i, \beta_i; \theta_{-i}) &= \mathbb{E}_{\pi,\, s_0 \sim \beta} \left[ \sum_{t=0}^\infty \gamma^t \sum_{t=0}^T \hat{r}_i(S_t, A_t; \theta_{-i}) \,\middle|\, S_0 = s_0 \right] \label{eq:discounted reward CMDP} \\
    K_i(\pi_i, \beta_i) &= \mathbb{E}_{\pi,\, s_0 \sim \beta} \left[ \sum_{t=0}^\infty \gamma^t c_i(S_t, A_t) \,\middle|\, S_0 = s_0 \right] \label{eq:discounted cost function CMDP}
\end{align}

The optimization problem faced by each agent, assuming fixed constraint parameters $\theta$, is then:
\begin{align}
    \text{minimize:} \quad & J_i(\pi_i, \beta_i; \theta_{-i}) \tag{$CMDP_i$} \label{eq:CMDP agent i} \\
    \text{subject to:} \quad & K_i(\pi_i, \beta_i) \leq \theta_i \notag
\end{align}

\subsection{Learning the Optimal Local Constrained Policy}
\label{subsec:finding optimal constrained policy}

A central advantage of our framework is its flexibility: the local constrained policy optimization can be performed using any safe reinforcement learning (SRL) method suited to the environment. In particular, by adopting a Lagrangian-based approach, we decouple constraint enforcement from reward maximization. This separation enables the use of any standard reinforcement learning algorithm on the fastest timescale, with constraint satisfaction handled independently on a slower timescale. This modularity is crucial in decentralized settings with heterogeneous agents or dynamics, allowing DCC to adapt to a wide range of learning scenarios and observation structures.

Lagrangian methods are a natural fit for our setting, as they allow constraint satisfaction to be enforced without the need for projection or repair, and are compatible with standard RL algorithms. While such methods can exhibit instability or slow convergence—particularly in settings where constraint violations incur severe penalties or pose safety risks—these issues are mitigated in our case. Here, constraints reflect system-level efficiency or coordination objectives rather than hard safety requirements, so temporary violations during training do not lead to catastrophic outcomes. As a result, we can safely use Lagrangian updates without sacrificing performance or stability, making them a principled and practical choice for decentralized learning under shared constraints.

In particular, we follow the structure of RCPO \cite{tessler2018reward} for constrained optimization, employing either PPO \cite{schulman2017proximal}, DQN \cite{mnih2013playing} or Q-learning \cite{watkins1992q}, depending on the scale of the system, as the algorithm to handle the fastest timescale

In this framework, in the fastest timescale each RL agent optimizes a shaped reward of the form:
\begin{equation}
    \hat{r}_i^{\text{total}} (s_i, a_i; \theta_{-i}) = \hat{r}_i (s_i, a_i; \theta_{-i}) + \lambda_i c_i (s_i, a_i),
    \label{eq:shaped_reward}
\end{equation}
where $\hat{r_i}$ and $c_i$ have been defined in \cref{subsec:CMDP single agent}. \\
% This aligns with prior works in the context of MARL applied to social dilemmas \af{(here it would be nice to have a sentence to describe social dilemmas, otherwise eliminate the reference)}, where the reward is usually reshaped as $$r_i^{\text{total}} = \alpha r_i^{\text{env}} + \beta r_i^{\text{mot}}$$ with $\alpha$ and $\beta$ representing weights which can be defined in different ways \cite{hughes2018inequity,mckee2020social,mu2024multi,wang2018evolving}, \( r_i^{\text{env}} \) is the standard environmental reward \cref{eq:immediate reward system} , and \( r_i^{\text{mot}} \) is an intrinsic motivation reward derived from a shared constraint mechanism and acts as the cost function in our application. \\

The scalar coefficient \( \lambda_i \) serves as a Lagrange multiplier, adaptively tuned during training to ensure constraint satisfaction. To formalize this, we define the agent-specific Lagrangian function:
\begin{equation}
    \mathcal{L}_i(\pi_i, \lambda_i; \theta) = \mathbb{E}_{\pi_i} \bigg[ \sum_{t=0}^\infty \gamma^t \hat{r}_i(s_t, a_t; \theta_{-i}) + \lambda_i \left( c_i(s_t, a_t) - \theta_i \right) \bigg]
\label{eq:lagrangian function}
\end{equation}
where \( \theta_i \) is the fixed constraint threshold for agent $i$, and $\theta_{-i}$ denotes the sum of the vector of constraint parameters associated with all other agents. In this formulation, $\theta_{-i}$ is treated as fixed during the agent's local optimization, capturing the expected influence of other agents on the congestion dynamics.
The $k$-th update of the Lagrange multiplier is then computed as follows:
\begin{align*}
    \lambda_i^{k+1} &= \Gamma_{\lambda_i} \left[ \lambda_i^k + \eta_i(k) \nabla_\lambda \mathcal{L}_i (\pi_i, \lambda_i^k ; \theta)   \right] \\
    &= \Gamma_{\lambda_i} \left[ \lambda_i^k + \eta_i(k) \rp{\expvalDist{\pi_i}{\sum_{t=0}^\infty \gamma^t \sum_{t=0}^T c_i} - \theta_i } \right]
\end{align*}
and $\Gamma_{\lambda_i}$ is a projection on the space of possible Lagrange multipliers, i.e. the set of positive real numbers and $\eta_i(t)$ is a descending sequence converging to 0.

Crucially the value of \( \lambda_i \) is optimized for each agent independently, allowing the coordination signal to emerge in a fully decentralized manner. Our method enables agents to learn how the cost function should influence their behavior individually, in order to satisfy long-term constraints while maintaining local performance.

\subsection{Coordination via optimization of the constraints}
\label{subsec:optimization of constraints}
Having described the two faster timescales, we now focus on the slowest one, where the constraint vector $\theta$ is optimized to improve global coordination.

In particular, we want to show that we can converge to the local optimal value of the constraint vector $\theta \in \mathbb{R^N}$. The safe RL algorithms described in \cref{subsec:finding optimal constrained policy} will then be able to find the optimal policy for that value.

We consider the long term Lagrangian reward with regard to $\theta$, assuming that we can solve the corresponding constrained problem and find the optimal values of $\pi$ and $\lambda$. This corresponds to the quantity we want to minimize for each agent. With abuse of notation, we define the quantity 
%We consider the long term Lagrangian reward of each agent, $\tilde{J}^\ell_i(\theta_i, \lambda_i, \pi_i)$, defined as 
\begin{align}
    \hat{J}_i(\theta) &= \hat{J} (\pi_i^\star (\theta))\notag \\
    & =  \mathcal{L}_i (\pi_i^\star (\theta), \lambda_i^\star (\theta) ; \theta) \notag \\
    &= \mathbb{E}_{a \sim \pi^\star_i(\theta)} \bigg[ \sum_{t=0}^\infty \gamma^t \sum_{t=0}^T \hat{r}_i(s_t, a_t; \theta_{-i}) + \lambda_i^\star \cdot (c_i(s_t, a_t) - \theta_i) \bigg] \label{eq:definition objective function optimal policy}
\end{align}  
and the function we want to minimize is 
\begin{equation}
    \hat{J}(\theta) = \sum_i \hat{J}_i (\theta)
    \label{eq:objective function slow timescale}
\end{equation}
First we show the differentiability of this function with regard to the constraint $\theta$.
\begin{lemma}
    $\hat{J}_i(\theta)$ is differentiable in $\theta$ almost everywhere.
    \label{lemma:component objective function differentiable}
\end{lemma}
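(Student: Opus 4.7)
The plan is to exploit the structure of $\tilde{J}^\ell_i(\theta)$ as an optimal Lagrangian value associated with agent $i$'s CMDP at parameter $\theta$. I would decompose the dependence on $\theta$ into two channels: (i) the explicit linear term $-\lambda_i \theta_i / (1-\gamma)$ coming from the constraint penalty, and (ii) the congestion coupling term, which enters the stage reward only through $d(1 + \theta_{-i})$ and is therefore $C^1$ in $\theta_{-i}$ by the assumption that $d$ is differentiable with existing second derivative. Everything else in $\hat{r}_i$ is independent of $\theta$.

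First I would treat the policy and multiplier as held fixed. For any fixed $(\pi_i, \lambda_i)$, the function
\begin{equation*}
\tilde{J}^\ell_i(\theta, \lambda_i, \pi_i) \;=\; \mathbb{E}_{\pi_i}\!\Big[\sum_{t\ge 0}\gamma^t\big(f_i(s_t) + \mathbb{I}[a_t=a_{\text{crowd}}]\,d(1+\theta_{-i}) + \lambda_i(c_i(s_t,a_t)-\theta_i)\big)\Big]
\end{equation*}
is $C^1$ in $\theta$: boundedness of the one-step reward and cost on the finite state-action space, combined with the geometric factor $\gamma^t$, licenses dominated convergence so that $\nabla_\theta$ commutes with the expectation and the infinite sum; the remaining gradient amounts to $d'(1+\theta_{-i})$ times the discounted occupancy of the crowded action plus the $-\lambda_i/(1-\gamma)$ term. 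This handles the fixed-argument case with full differentiability.

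Next I would pass to the envelope. Interpreting $\tilde{J}^\ell_i(\theta)$ as the value after optimizing $\pi_i$ (and, through the faster timescales, $\lambda_i$), I would reformulate the inner CMDP in occupation-measure form. For a finite state-action space, this reformulation yields a linear program whose objective coefficients depend smoothly on $\theta_{-i}$ (via $d(1+\theta_{-i})$) and whose right-hand side depends linearly on $\theta_i$ (via the cost constraint). Standard LP sensitivity analysis then implies that the optimal value is piecewise smooth: on each open region of $\theta$ for which the optimal basis (equivalently, the support of the optimal occupation measure) is locally constant, the envelope reduces to the fixed-policy case from Step 1 and is therefore $C^1$. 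The set of $\theta$ at which the optimal basis switches is a finite union of lower-dimensional algebraic surfaces and hence Lebesgue-null, giving differentiability almost everywhere. Equivalently, one may note that the value function is concave in $\theta_i$ (standard CMDP property) and smooth in $\theta_{-i}$ up to this null set, and concave functions are differentiable a.e.

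The main obstacle I anticipate is formalizing the envelope argument cleanly when the optimal policy of the CMDP is a randomization, and when the multiplier $\lambda_i$ itself is implicitly a function of $\theta$. I would handle the randomization by working in occupation-measure (LP) space, where any optimal stationary randomized policy corresponds to a vertex or convex combination of vertices; and I would handle the multiplier by appealing to the saddle-point characterization that under strong duality (which holds for finite CMDPs) the Lagrangian value equals the constrained optimum, so the envelope theorem can be applied directly to the saddle value rather than tracking $\lambda_i(\theta)$ explicitly.
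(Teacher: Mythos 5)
Your proposal reaches the right conclusion and shares the paper's key structural move --- recasting agent $i$'s constrained problem in occupation-measure (LP) form and treating $\tilde{J}^\ell_i$ as a saddle value under strong duality --- but the technical engine is different. The paper verifies the hypotheses of Milgrom--Segal's envelope theorem for parameterized constrained programs (Corollary 5 of \cite{milgrom2002envelope}): compactness and convexity of the occupation-measure set, continuity of $f_t$ and $g_t$ in $(x,t)$, zero duality gap, and a Slater point (the policy that never offloads). That machinery yields absolute continuity of $V$ and explicit max-min formulas for the one-sided directional derivatives, which the paper then reuses verbatim to prove \cref{prop:sign derivatives objective function} (the signs $-\lambda^\star/\theta_{max}$ and $\theta_i\,\partial_{\theta_j} d(1+\theta_{-i})$). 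Your route --- fixed-policy smoothness via dominated convergence, then parametric-LP sensitivity plus concavity in the right-hand side --- is more elementary and self-contained, but two points deserve care. First, the claim that the basis-switching set is a finite union of \emph{algebraic} surfaces presumes polynomial or analytic dependence of the objective coefficients on $\theta_{-i}$, whereas $d$ is only assumed differentiable; the cleaner justification is that the value is a pointwise extremum of finitely many $C^1$ functions (one per vertex of the occupation polytope) and is concave in $\theta_i$, hence locally Lipschitz, so Rademacher's theorem delivers differentiability almost everywhere. Second, your argument establishes a.e.\ differentiability but does not hand you the directional-derivative identities that the paper extracts from the envelope theorem and needs downstream; if you stop at Rademacher you would have to redo that work separately. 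With those two repairs your proof is valid, and it has the virtue of not requiring the full Milgrom--Segal apparatus.
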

\begin{proof}
    The proof of this result is provided in \cref{subsec:proof differentiability objective function}, where we show that a variation of the Envelope Theorem \cite{milgrom2002envelope} can be applied within the context of our setting.
\end{proof}

% \af{here we should briefly discuss the fact that we can use stochastic approximation. And then in the final subsection, mention proposition 2 as a way to compute exactly the gradients given only one evaluation and simplify everything (still mention that it is an approximation, so we could say it is a way to simplify the computation of the derivatives)}

In order to discuss the optimization of the virtual constraint we want to use the stochastic gradient ascent methods of the Kiefer-Wolfowitz family \cite{kushner1978stochastic}. The iteration scheme is
\begin{equation}
    \theta^{n+1} = \Pi_{\Theta} \rp{\theta^n  - \alpha_n \hat{g}_n}
    \label{eq:iterative scheme stochastic approximation}
\end{equation} where $\theta^n$ is the $n$th iterate of the parameter, $\hat{g}_n$ represents an estimate of the gradient of the objective function, $\{\alpha_n\}_n$ is a sequence converging to 0 and $\Pi_{\Theta}$ is a projection on the space of possible virtual constraint vectors $\Theta$.

If we drop $\lambda$ and $\pi$ for the sake of clarity, the $i$-th component of the gradient estimate writes 
\begin{equation}
    \rp{\hat{g}_n}_i = \frac{\hat{J} \rp{\theta + c_n \rp{\Delta_n}_i} - \hat{J} \rp{ \theta}}{c_n (\Delta_n)_i}
\end{equation}
where $\{c_n\}$ is a sequence converging to 0 and  $\{ \Delta_n \}$ is an i.i.d. vector sequence of perturbations of i.i.d. components $\{ (\Delta_n)_i, i = 1, \dots, N \}$ with zero mean and where $\expvalDist{}{\lvert (\Delta_n)_i^{-2} \rvert}$ is uniformly bounded.

The following result defines the conditions on the objective function, step-size sequence ${\alpha_n}$, and gradient estimates $\hat{g}_n$ that give the convergence to a local minima.
\begin{comment}
\label{prop:convergence SPSA problem studied}
     Let $\{ \alpha_n \}$ and $\{ c_n \}$ be such that $\sum_n \alpha_n = \infty$, $\sum_n \rp{\frac{\alpha_n}{c_n}}^2 < \infty$. Moreover, assume that $\expval{\lvert (\Delta_n)_i \rvert^{-2}}$ is uniformly bounded on $\Theta$ and that $\tilde{J}$ is strictly unimodal over $\Theta$. Then the iteration  \eqref{eq:iterative scheme stochastic approximation} converges to the optimal parameter  $\theta^*$  w.p.1.
% \end{proposition}
% \begin{proof}
    See Proposition 1 in \cite{l1994stochastic}.
\end{comment}
\begin{theorem}
\label{thm:convergence algorithm}
Assume that 
\begin{itemize}
    \item $\{ \alpha_n \}$ and $\{ c_n \}$ be such that $\sum_n \alpha_n = \infty$, $\sum_n \rp{\frac{\alpha_n}{c_n}}^2 < \infty$
    \item $\expvalDist{}{\lvert (\Delta_n)_i \rvert^{-2}}$ is uniformly bounded on $\Theta$
\end{itemize}
Then the algorithm converges to a local optimal value of $\min_\theta \sum_i \hat{J}_i(\theta)$.
\end{theorem}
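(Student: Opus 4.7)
The plan is to recognise the iteration \eqref{eq:iterative scheme stochastic approximation} as a projected Kiefer--Wolfowitz / SPSA-type stochastic approximation scheme and invoke a standard convergence result from the stochastic approximation literature (in particular the type of result stated as Proposition~1 in \cite{l1994stochastic} / Kushner--Clark). The three ingredients that have to be assembled are: (i) the gradient estimate is asymptotically unbiased with controlled variance, (ii) the projected Robbins--Monro / KW step-size conditions hold, and (iii) the objective is regular enough along the trajectory of the iterates so that the usual ODE approximation applies.

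The first step is to handle the smoothness of the objective. By \Cref{lemma:component objective function differentiable}, each $\tilde{J}_i^\ell(\theta)$ is differentiable almost everywhere on $\Theta$, hence so is $\tilde{J}^\ell(\theta)=\sum_i \tilde{J}_i^\ell(\theta)$, and by \Cref{prop:sign derivatives objective function} the partial derivatives are bounded on $\Theta$. I would additionally argue (using the envelope-theorem argument of \Cref{subsec:proof differentiability objective function}) that $\tilde{J}^\ell$ is locally Lipschitz, so that the second-order Taylor remainder used below is controlled. Since the non-differentiable set has Lebesgue measure zero and the iterates are perturbed by the continuously distributed perturbation $\Delta_n$, the event that a partial derivative fails to exist at the evaluation points is almost surely negligible.

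The second step is to control the finite-difference gradient estimator. Expanding the numerator of $(\hat{g}_n)_i$ by a Taylor expansion around $\theta^n$ yields
\begin{equation}
(\hat{g}_n)_i \;=\; \partial_i \tilde{J}^\ell(\theta^n) \;+\; \frac{c_n}{2}(\Delta_n)_i \,\partial_i^2 \tilde{J}^\ell(\xi_n),
\end{equation}
where $\xi_n$ is some intermediate point. Taking conditional expectation and using that $(\Delta_n)_i$ is zero-mean independent of the past and that second-order terms stay bounded on the compact $\Theta$, one gets $\mathbb{E}[(\hat{g}_n)_i \mid \mathcal{F}_n] = \partial_i \tilde{J}^\ell(\theta^n) + O(c_n)$, so the bias vanishes. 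The variance is bounded by the uniform bound on $\mathbb{E}[|(\Delta_n)_i|^{-2}]$ together with boundedness of $\tilde{J}^\ell$ on $\Theta$, yielding $\mathbb{E}[\|\hat{g}_n\|^2 \mid \mathcal{F}_n] = O(1/c_n^2)$.

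The third step is to plug these estimates into a standard convergence theorem for projected stochastic approximation. The step-size conditions $\sum_n \alpha_n=\infty$ and $\sum_n (\alpha_n/c_n)^2<\infty$ are precisely the conditions that make the noise term $\alpha_n(\hat{g}_n - \mathbb{E}[\hat{g}_n\mid\mathcal{F}_n])$ into an $L^2$-convergent martingale (Doob's inequality), while the bias contribution $\sum_n \alpha_n c_n$ is summable after using Cauchy--Schwarz. The projected recursion then shadows the ODE $\dot\theta = -\Pi_\Theta \nabla \tilde{J}^\ell(\theta)$, whose stable points are exactly the local minima of $\tilde{J}^\ell$ on $\Theta$. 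Applying the standard ODE-method argument (Kushner--Clark / Borkar) one concludes that $\theta^n$ converges almost surely to the set of stationary points of $\sum_i \tilde{J}_i^\ell$ restricted to $\Theta$, and generically to a local minimum, which is the claim.

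The main obstacle is the gap between the almost-everywhere differentiability provided by \Cref{lemma:component objective function differentiable} and the everywhere-differentiability that classical KW/SPSA theorems usually assume. I would bridge this either by showing that the set of non-differentiability is null and the perturbed evaluation points avoid it with probability one (so the finite differences are well-defined), or more cleanly by appealing to a version of the theorem formulated for locally Lipschitz objectives using Clarke subdifferentials, where stationary points are defined by $0 \in \partial \tilde{J}^\ell(\theta)$. A secondary but routine difficulty is verifying the local Lipschitz/bounded Hessian bookkeeping so that the $O(c_n)$ bias claim is rigorous; this should follow from the compactness of $\Theta$ together with the explicit form of $\tilde{J}^\ell$ and the regularity of $d(\cdot)$ assumed in \Cref{sec:system model}.
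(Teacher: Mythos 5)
Your proposal takes essentially the same route as the paper: the paper's own proof is a two-line appeal to \Cref{lemma:component objective function differentiable} for differentiability followed by an invocation of Proposition~1 of \cite{l1994stochastic} under the stated step-size conditions, which is exactly the skeleton you flesh out with the bias/variance and martingale bookkeeping. You are in fact more careful than the paper on one point — the gap between the almost-everywhere differentiability delivered by the lemma and the regularity the cited convergence theorem assumes — which the paper's proof silently elides and which your proposed fixes (null-set avoidance via the continuous perturbation, or a Clarke-subdifferential formulation) would be needed to close rigorously.
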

\begin{proof}
    \Cref{lemma:component objective function differentiable} guarantees the differentiability of the objective function. The assumptions on the sequences $\alpha_n$ and $c_n$ allow us to use Proposition 1 in \cite{l1994stochastic} for a subset of $\Theta$ which contains the initial vector $\theta_0$. This proves the convergence of the algorithm to a locally optimal solution.
\end{proof}

See Proposition 1 in \cite{l1994stochastic} for the statement and the necessary conditions for the convergence to the global optimal point.

\subsection{Numerical optimization of the constraints via evaluations of the local policies}
\label{subsec:efficient computation of the gradient}

We conclude this section by showing how the structure of the gradient can be further simplified, leading to a more efficient computation.

% To this end, we observe that each component of the approximate objective function can be written as $$\tilde{J}^\ell_i(\theta) = \tilde{J}^\ell (\theta_i, \theta_{-i})$$

The following result, which is an immediate consequence of the chain rule,  reveals how the computation of the gradient of the objective function can be simplified, thereby improving the overall efficiency of the algorithm:
\begin{proposition}
    \begin{equation}
        \frac{\partial  }{\partial \theta_i} \hat{J}(\theta_i, \theta_{-i}) = \frac{\partial \hat{J}_i (\theta_i, \theta_{-i})}{\partial \theta_i}  + \sum_{j \neq i} \frac{\partial \ \hat{J}_j(\theta_i, \theta_{-i})}{\partial \theta_{-j}}
        \label{eq:derivative objective function}
    \end{equation}   
    \label{prop:decomposition of the objective function}
\end{proposition}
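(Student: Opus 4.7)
The plan is to derive the identity directly from the sum-rule and the chain rule, exploiting the specific way that the global Lagrangian $\tilde{J}^\ell(\theta)$ depends on the shared scalar aggregates $\theta_{-k} = \sum_{j \neq k} \theta_j$. Concretely, I will rewrite
\begin{equation*}
\tilde{J}^\ell(\theta) \;=\; \sum_{k=1}^N \tilde{J}^\ell_k(\theta_k, \theta_{-k}),
\end{equation*}
and then differentiate each summand with respect to the single coordinate $\theta_i$. The entire content of the claim is bookkeeping about which arguments of the functions $\tilde{J}^\ell_k$ the coordinate $\theta_i$ enters.

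First I would treat the term $k=i$. Here $\tilde{J}^\ell_i$ depends on $\theta_i$ directly through its first argument, while the second argument $\theta_{-i}$ does not involve $\theta_i$ at all. Hence the contribution is exactly $\partial \tilde{J}^\ell_i / \partial \theta_i$. Next I would treat each term $k \neq i$. In this case $\theta_i$ does not appear as the first argument $\theta_k$, but it does appear inside $\theta_{-k} = \sum_{j \neq k} \theta_j$ with a coefficient of $1$, so $\partial \theta_{-k} / \partial \theta_i = 1$. Applying the chain rule to $\tilde{J}^\ell_k(\theta_k, \theta_{-k})$ yields $(\partial \tilde{J}^\ell_k / \partial \theta_{-k}) \cdot 1$. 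Summing over $k \neq i$ and combining with the $k=i$ term produces exactly the right-hand side of \eqref{eq:derivative objective function}.

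For this to be legitimate I need each partial derivative appearing above to exist. That is precisely the content of \Cref{lemma:component objective function differentiable}, which asserts almost-everywhere differentiability of $\tilde{J}^\ell_i$ in $\theta$, and which I would cite at the outset so that the chain-rule manipulation is valid for a.e.\ $\theta$. Since the lemma provides differentiability with respect to the full vector $\theta$, it in particular provides differentiability with respect to the aggregated scalar $\theta_{-k}$ along the constant direction $(0,\dots,1,\dots,0)$ (a unit perturbation in $\theta_i$, $i \neq k$), which is all that is needed to interpret $\partial \tilde{J}^\ell_k / \partial \theta_{-k}$ unambiguously.

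The proof has no genuine obstacle: the only minor care point is the notational one of making sure that the symbol $\partial/\partial \theta_{-j}$ on the right-hand side is interpreted as the partial derivative of $\tilde{J}^\ell_j$ with respect to its scalar aggregate argument, rather than as a gradient with respect to a vector. I would add a brief sentence clarifying this convention, so that the identity is unambiguous and its implementation (namely, estimating each scalar partial via a one-dimensional finite difference, as used in the SPSA-style scheme of \eqref{eq:iterative scheme stochastic approximation}) is transparent.
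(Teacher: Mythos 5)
Your proposal is correct and follows essentially the same route as the paper: the paper's proof is simply ``apply the chain rule and observe that $\partial \theta_{-j}/\partial \theta_i = 1$ for $j \neq i$,'' which is exactly the bookkeeping you carry out. Your additional appeal to \Cref{lemma:component objective function differentiable} for the existence of the partial derivatives, and the remark on interpreting $\partial/\partial\theta_{-j}$ as a scalar partial, are sensible elaborations but do not change the argument.
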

\begin{proof}
    After applying the chain rule, it is sufficient to observe that $ \frac{\partial \theta_{-j}}{\partial \theta_i} =1$ when $j\neq i$.
\end{proof}    

\Cref{prop:decomposition of the objective function} illustrates how the problem structure, together with the form of the constrained value function, can be leveraged to compute the gradient of the approximate objective function efficiently. Specifically, this result allows each component of the gradient at a given point $\theta$ to be estimated using only three stochastic evaluations per agent, rather than $N+1$, as we would only need to evaluate $\hat{J}_i(\theta_i, \theta_{-i}), \hat{J}_i(\theta_i + \epsilon, \theta_{-i})$ and $\hat{J}_i(\theta_i, \theta_{-i} + \epsilon)$.
% In this context, computing $\tilde{J}^\ell_i(\hat{\theta}_i, \hat{\theta}_{-i})$ involves finding the optimal constrained policy for agent $i$ under the constraint vector $\hat{\theta}$ using the safe reinforcement learning algorithm described in \cref{subsec:finding optimal constrained policy}, and then evaluating the resulting policy.\\
% Building on \Cref{prop:sign derivatives objective function}, the derivative of the objective could be computed directly, avoiding additional evaluations of the optimal noisy policy and thus improving efficiency.  We leave this refinement to future work.

While \Cref{prop:decomposition of the objective function} enables an efficient numerical estimation of the gradient, the following result provides a more precise analytical characterization of its components and clarifies their qualitative behavior.

% The following result formalizes the intuition regarding the sign of the derivatives of the objective function and express the exact form of the local gradients. In particular, it shows that relaxing the constraint for one device decreases its own reward (i.e., benefits it from a selfish perspective) while penalizing the others. This behavior arises naturally from the problem’s structure: if one device is allowed to offload more frequently, it experiences a lower individual reward, whereas the remaining devices must offload less often due to the increased congestion penalty. %We also numerically verified these results in \cref{subsec:appendix gradient objective function}, using linear programming to compute the optimal policy in small systems.
\begin{proposition}
The following is verified:
    \begin{enumerate}
        \item $\frac{\partial \hat{J}_i (\theta)}{\partial \theta_i} = - \frac{\lambda^\star}{M}  \leq 0$ a.e., where $\lambda^\star$ is the optimal value of the Lagrange multiplier in $\theta_i$ and $M$ is a positive constant,
        % \item $\frac{\partial \hat{J}_i (\theta)}{\partial \theta_i} \leq 0$ a.e., 
        \item $\frac{\partial \hat{J}_i (\theta)}{\partial \theta_j} = \theta_i  \frac{\partial}{\partial \theta_j} d(1 + \theta_{-i}) \geq 0, \ \forall j \neq i$ a.e., where $\pi^\star_i$ is the optimal policy for agent $i$.
        % \item $\frac{\partial \hat{J}_i (\theta)}{\partial \theta_j} \geq 0, \ \forall j \neq i$ a.e.
    \end{enumerate}
    \label{prop:sign derivatives objective function}
\end{proposition}
\begin{proof}
    It is once again a consequence of \cref{thm:envelope theorem constrained problem}; in particular of the definition of the directional derivatives. The full proof is provided in \cref{subsec: proof sign derivatives}. 
\end{proof}

\Cref{prop:sign derivatives objective function} formalizes the intuition about the sign of the gradient components and provides the exact expression for the local derivatives. In particular, relaxing the constraint for one device decreases its own reward (i.e., benefits it from a selfish perspective) while penalizing the others, as a consequence of the shared congestion effect.  

It is worth noting, however, that this expression still relies on the exact values of the optimal multipliers $\lambda_i^\star$, which are not directly available during learning. The proposed algorithm converges to these values asymptotically, but in practice the gradients are approximated. For this reason, in the numerical experiments we currently rely on finite-difference estimates.  

The additional simplification obtained by directly substituting $\lambda_i^\star$ in place of the local finite-difference evaluations could further reduce computation times. This refinement has not yet been tested in the stochastic approximation setting, though we have verified its correctness in small systems where $\lambda_i^\star$ was computed exactly via linear programming (\cref{subsec:appendix gradient objective function}). Evaluating the performance of this direct substitution in the approximate setting is left for future work.

\medskip

The framework described in this section provides a principled and scalable approach to decentralized coordination in multi-agent systems with limited communication. By decomposing the global objective through constrained MDP formulations and optimizing across three timescales, agents can independently learn behaviors that align with system-level goals. The theoretical guarantees and structural properties of the reward function justify this decomposition and enable efficient learning despite the inherent coupling in agent behavior. These results lay the foundation for the empirical evaluation in the next section, where we assess the algorithm’s performance under realistic edge computing scenarios.

\section{Numerical experiments}
\label{sec:numerical results}
We consider the environment introduced in \cref{sec:system model}, with the toy model detailed in \cref{sec:appendix toy model description}. Each experiment is averaged over 15 runs on independently generated environments. % to improve robustness. 
We focus on small instances where the optimal policy can be determined via Q-learning, enabling clear validation of our approach.

We compare \icAlg against two representative baselines. First, independent Q-learning (IQL), which represents the fully decentralized setting without communication. Second, MAPPO \cite{yu2022surprising}, a CTDE method that has shown strong performance in cooperative MARL benchmarks. We use IQL rather than the more common IPPO because the toy environments are very small, and IPPO’s larger architectures tend to overfit, obscuring the relative benefits of our coordination mechanism. % Additional comparisons with a centralized actor–critic (A2C) and with the shared-reward variant of IQL are reported in the appendix.

% Baselines are evaluated every 100k episodes, while \icAlg is evaluated every 300k episodes due to the three evaluations required for its gradient computation (\cref{sec:approximation optimal policy}). Each episode spans 100 steps, and we report discounted returns. The flexibility of the DCC framework also allows for an average-reward formulation, which we leave for future exploration.

Finally, we stress that this is an early-stage exploration of the DCC framework. Our experiments on toy environments are intended as proof-of-concept to validate theoretical properties and highlight scalability trends, rather than as comprehensive benchmarks. Extending this evaluation to realistic wireless network simulators constitutes an important direction for future work.
% For the smaller systems with 10 devices we also add an evaluation obtained applying the linear program assuming fixed uniform constraint $(\theta_i = 1/N \ \forall \ i)$.
\begin{figure}
    \begin{subfigure}[t]{0.55\textwidth}
        \centering
        \includegraphics[width = \linewidth]{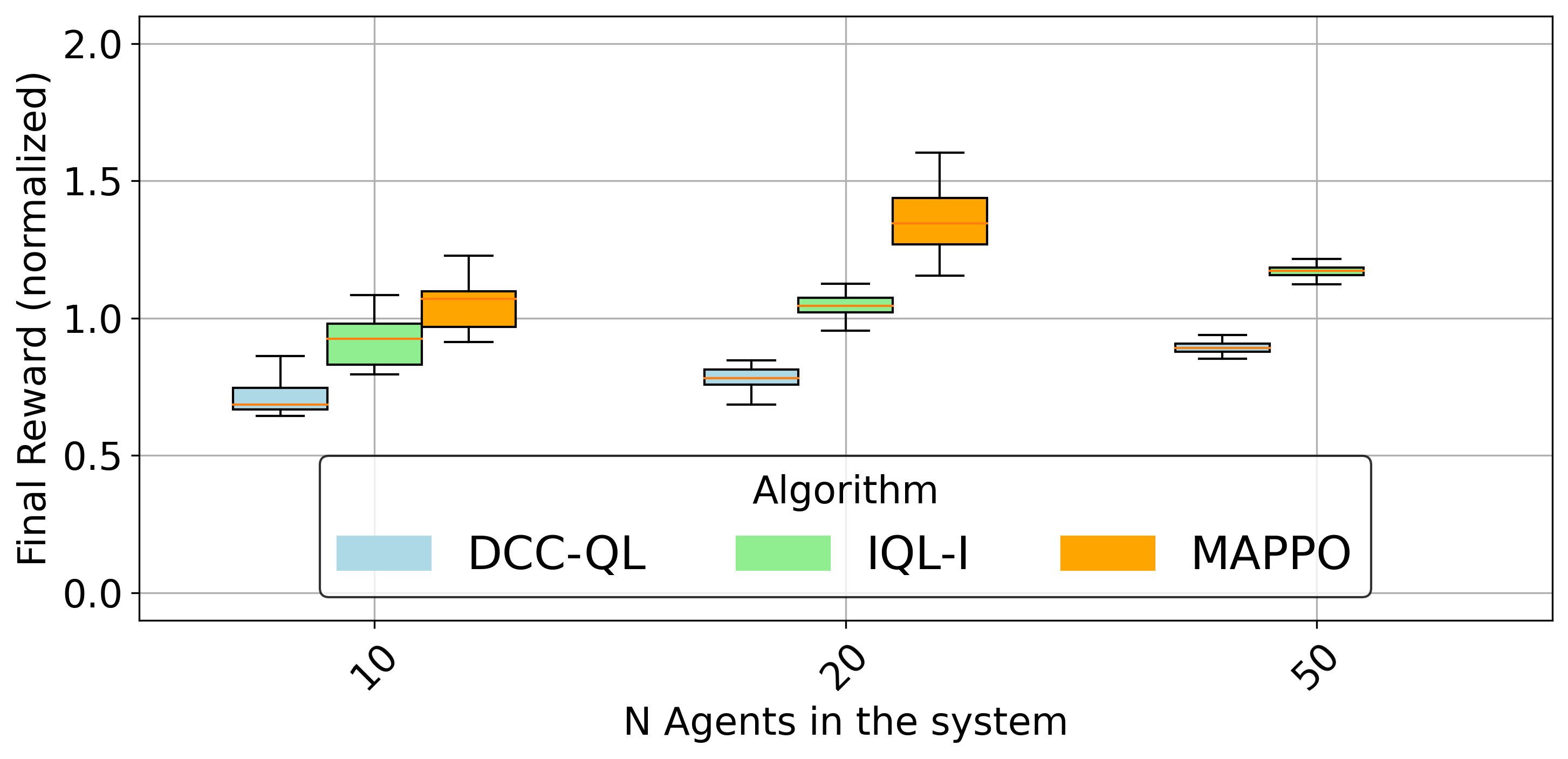}
        \label{subfig:scalability comparison}
        \caption{Normalized final average reward across methods and system sizes. Rewards are scaled so that 1 equals \icAlg’s performance after the first $10^5$ steps. Error bars indicate variability across seeds. MAPPO results for 50 devices (average $\approx  3.5$) are omitted to avoid distorting the scale.}
    \end{subfigure}
    \hfill
    \begin{subfigure}[t]{0.39\textwidth}
        \centering
        \includegraphics[width = \linewidth]{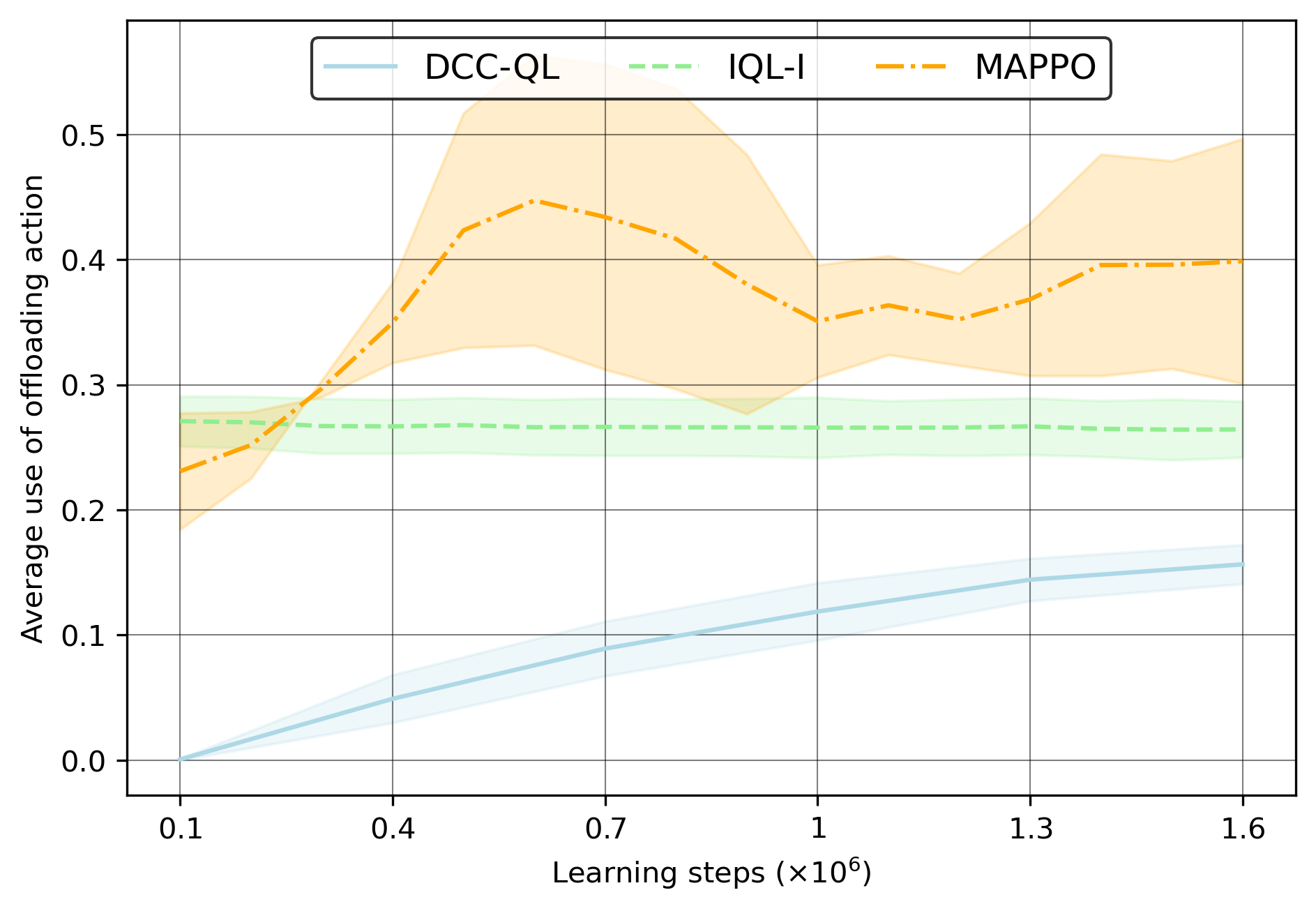}
        \caption{Evolution of the offloading frequency in a 10-device system. Starting from $\theta = 0$, \icAlg gradually converges to stable usage, while IQL quickly locks into a suboptimal policy with excessive offloading.}
        \label{subfig:offloading action comparison}
    \end{subfigure}
    \label{fig:main numerical results}
\end{figure}

\subsection{Scalability properties}
The first experiment evaluates the scalability of \icAlg by comparing its discounted reward after five iterations of constraint improvement—starting from $\theta = 0$, meaning agents are initially prohibited from using the common resource—with that of the baselines as the number of devices increases. All methods are given the same total number of policy-learning steps, and rewards are normalized so that 1 corresponds to the reward achieved by \icAlg after the first $10^5$ learning steps. As shown in Fig. a, \icAlg consistently outperforms independent Q-learning across all system sizes, while MAPPO, though competitive in small systems, degrades rapidly as the number of devices grows, likely due to its fixed network architecture being unable to handle the enlarged state–action space.
We also note that in larger systems, never using the offloading action—i.e., the normalization baseline obtained with $\theta = 0$ —yields higher rewards than IQL, underscoring how IQL’s lack of coordination prevents it from capturing the system dynamics.

\subsection{Offloading action frequency}
As a second observation, we analyze the frequency of the offloading action in the case with 10 devices (results for $N=20$ and $N=50$ are provided in the additional material). Figure b shows that \icAlg gradually evolves from never using the offloading action (the initial evaluation and normalization baseline) to a moderate and stable usage level, consistent with the constraints derived in \cref{sec:approximation optimal policy}. In contrast, IQL quickly converges to a suboptimal policy that overuses the offloading action, as expected given its attractiveness (Assumption 1) and the lack of any coordination mechanism. MAPPO, meanwhile, has not converged within the limited training budget considered here and continues to overuse the offloading action; however, when trained for 10 million steps (six times more than evaluated in this experiment), it eventually achieves performance comparable to \icAlg, indicating that it can learn the dynamics of the small system but only at significantly higher sample complexity.

\subsection{Effect of Initial Constraint}

In the main experiments, the environments considered were relatively homogeneous. In such cases, starting from uniform constraints is a natural choice. We therefore repeated the first main experiment from \cref{sec:numerical results}, initializing \icAlg with larger values of the constraints instead of $\theta = 0$. In particular, we chose as initial constraints values similar to the final values obtained by \icAlg from naive initial constraints.
\begin{figure}[h]
    \centering
    \includegraphics[width=0.65\linewidth]{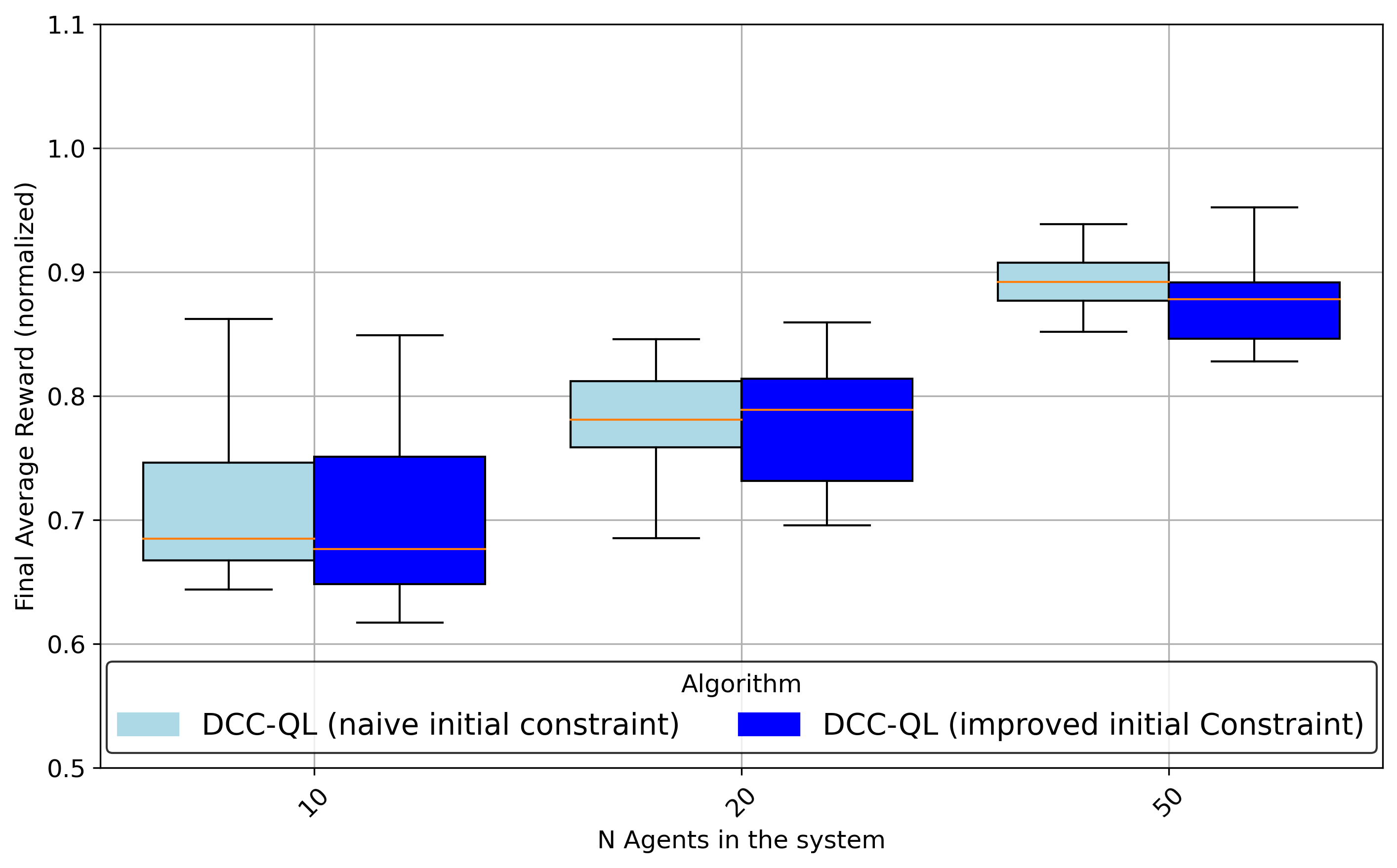}
    \caption{We compare the final normalized reward after 5 iteration of \icAlg when starting from a naive constraint ($\theta = 0$) and when starting from an optimized value, where the optimized one has been chosen by looking at the final values obtained by \icAlg when starting from naive initial constraints.}
    \label{fig:uniform constraint comparison boxplot}
\end{figure}

As shown in \cref{fig:uniform constraint comparison boxplot}, after five iterations of constraint improvement the final discounted reward is essentially the same for both initializations, with uniform initialization yielding slightly higher values. The difference is more pronounced in early iterations: when starting from uniform constraints, the algorithm achieves higher rewards sooner, particularly in systems with fewer devices. This confirms that a better prior on the constraints can accelerate learning, as observed in \cref{fig:comparison evolution reward uniform initial}.
\begin{figure}
    \centering
    \includegraphics[width=1\linewidth]{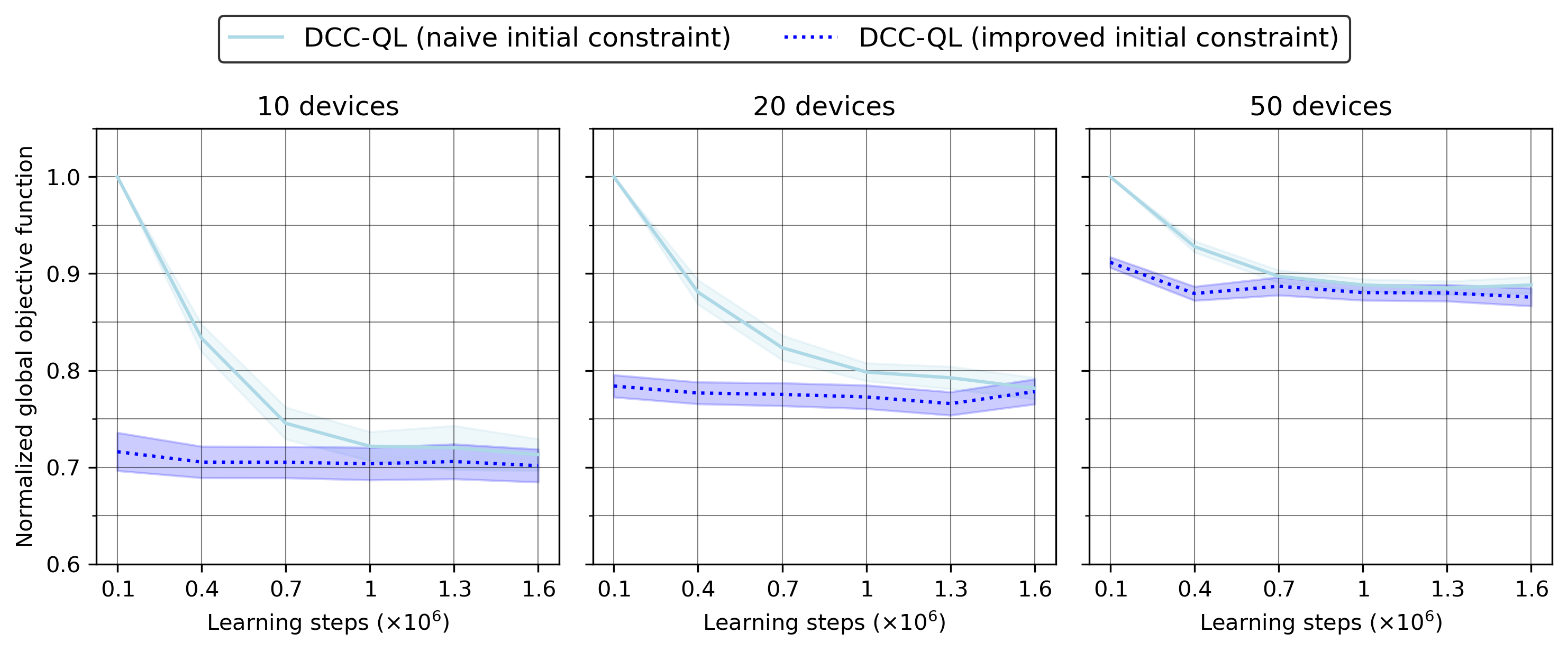}
    \caption{Comparison of the evolution of the reward as we start from optimized initial constraints in settings with a different amount of devices.}
    \label{fig:comparison evolution reward uniform initial}
\end{figure}

\subsection{Non linear penalty}
We investigate how the frequency of the offloading action changes when varying the exponent $\alpha$ of the penalty function $d(n)=(n-1)^\alpha$. The experiment was carried out with $20$ devices, since in the case of $10$ devices the offloading frequency remains essentially unchanged across all values of $\alpha$. This is likely because, with fewer devices, it is rare for many of them to simultaneously use the shared resource, making the penalty less pronounced. Moreover, the penalty is independent of $\alpha$ when exactly two devices offload at the same time, so differences only emerge when three or more devices compete for the shared resource—a situation that occurs more frequently with $20$ devices. \\

\begin{figure}[h]
    \centering
    \includegraphics[width=\linewidth]{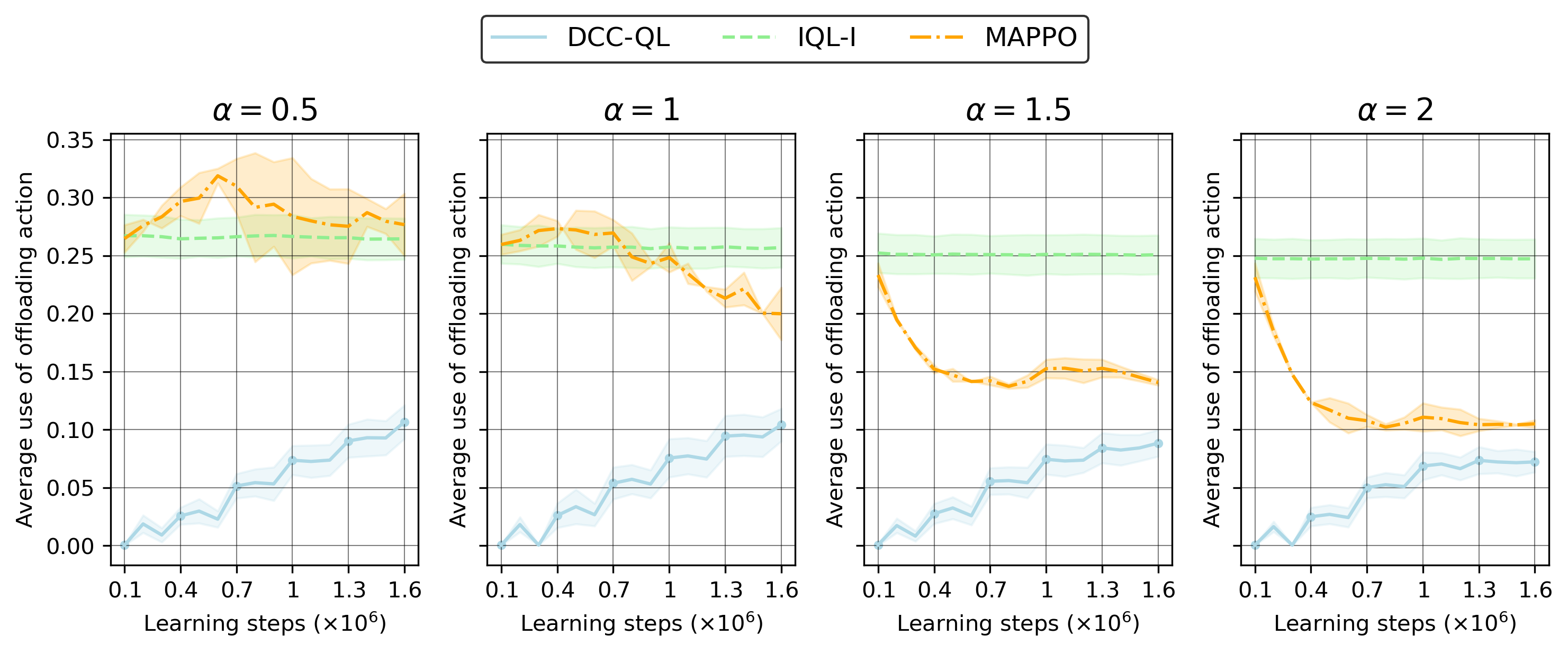}
    \caption{Evolution of the offloading action frequency for different values of the penalty exponent $\alpha$. Increasing $\alpha$ leads to a consistent decrease in offloading frequency across algorithms, reflecting the stronger penalization of simultaneous offloading.}
    \label{fig:comparison offloading nonilnear penalty}
\end{figure}
\Cref{fig:comparison offloading nonilnear penalty} shows that, across all algorithms, the offloading frequency decreases as $\alpha$ increases, consistent with the expectation that higher exponents strengthen the penalty and discourage simultaneous offloading. The effect is particularly pronounced for \icAlg and MAPPO. 

\subsection{Evaluation gradient objective function}
\label{subsec:appendix gradient objective function}
We conducted an additional experiment to empirically validate the gradient properties derived in \Cref{prop:sign derivatives objective function}. For each value of the exponent of the penalty function, we generated 15 small random environments and solved each CMDP with the linear program to obtain the optimal policy and its discounted reward.
First we perturbed the constraint vector by a finite noise $\epsilon$ and estimated gradients with respect to both the local component $\theta_i$ and the coupling term $\theta_{-i}$. The results (Fig.~\ref{fig:approximate gradient objective function}) confirm the theoretical prediction and the intuition: the local gradient is consistently negative, while the coupling gradient is positive.\\
\begin{figure}[h!]
\begin{comment}
    \begin{subfigure}[t]{0.45\textwidth}
        \centering
        \includegraphics[width = \linewidth]{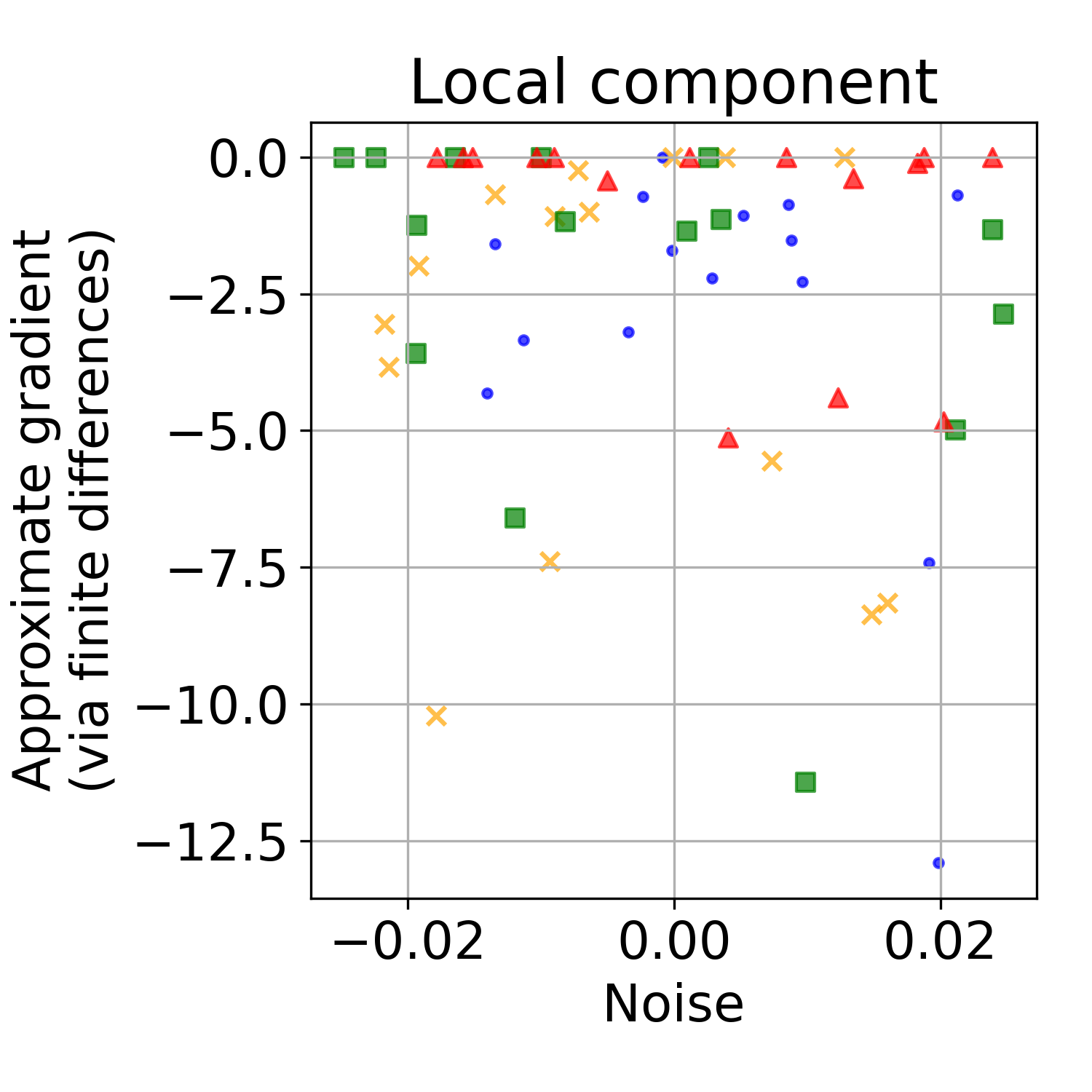}
        \caption{Gradient with regard to $\theta_i$; we expect only negative values according to \cref{prop:sign derivatives objective function}.}
        \label{subfig:local approximate gradient component}
    \end{subfigure}
    \hfill
    \begin{subfigure}[t]{0.45\textwidth}
        \centering
        \includegraphics[width = \linewidth]{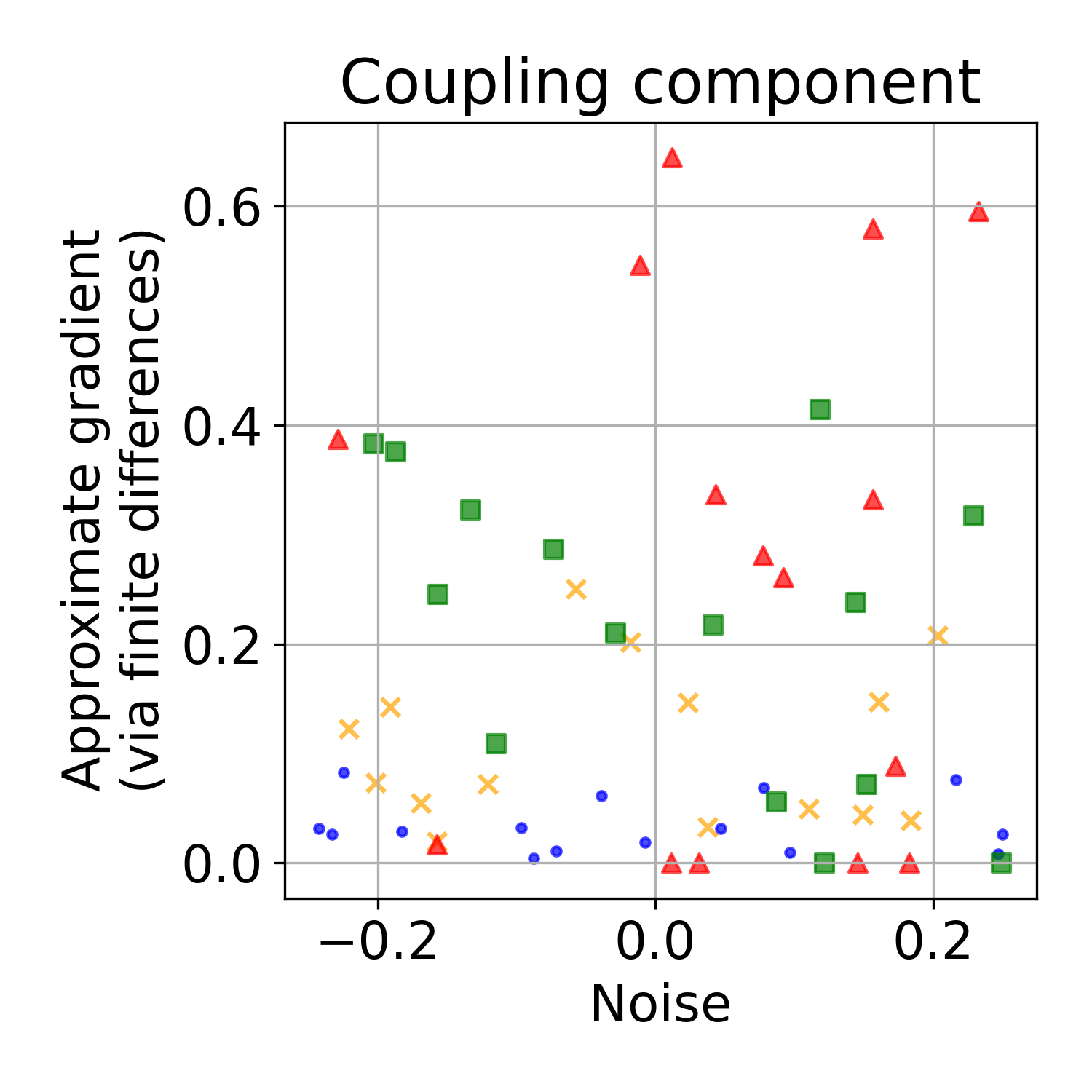}
        \caption{Gradient with regard to $\theta_{-i}$; we expect only positive values according to \cref{prop:sign derivatives objective function}.}
        \label{subfig:coupling approximate gradient component}
    \end{subfigure}
\end{comment}
    \centering
    \includegraphics[width = .8 \linewidth]{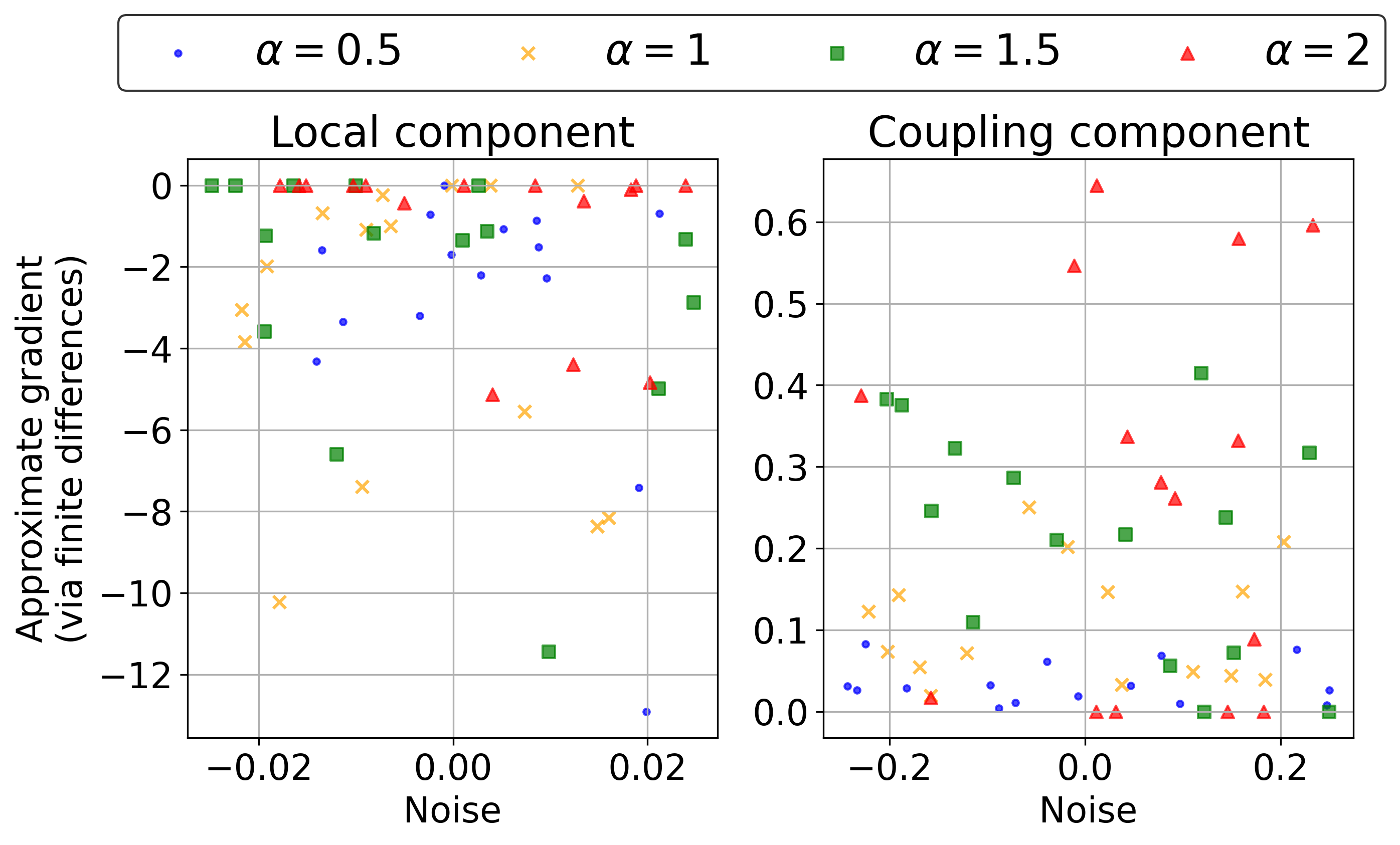}
    \caption{In these figures we evaluate an approximation of the gradient of $\tilde{J}_i^\ell(\theta)$ using the finite difference method. We considered a noise $\epsilon \in (0.01, 0.25)$ for both cases, In the left figure, representing the local component of the noise, we expected negative values, while in the right figure we expected positive values.}
    \label{fig:approximate gradient objective function}
\end{figure}
Then, using very small $\epsilon$, we compared left and right finite-difference estimates, which matched closely, confirming differentiability (not reported). Moreover, in \cref{fig:verification gradient exact values} we verified that the local derivative equals the negative of the optimal Lagrange multiplier, and that the coupling derivative matches the expression in \Cref{prop:sign derivatives objective function}.

These findings validate \cref{prop:sign derivatives objective function} and suggest a possible refinement of \icAlg: instead of estimating local gradients via finite differences, the algorithm could directly substitute the Lagrange multiplier, potentially leading to a more efficient implementation. We leave this extension to future work.

\begin{figure}[t]
    \begin{subfigure}[t]{0.45\textwidth}
        \centering
        \includegraphics[width = \linewidth]{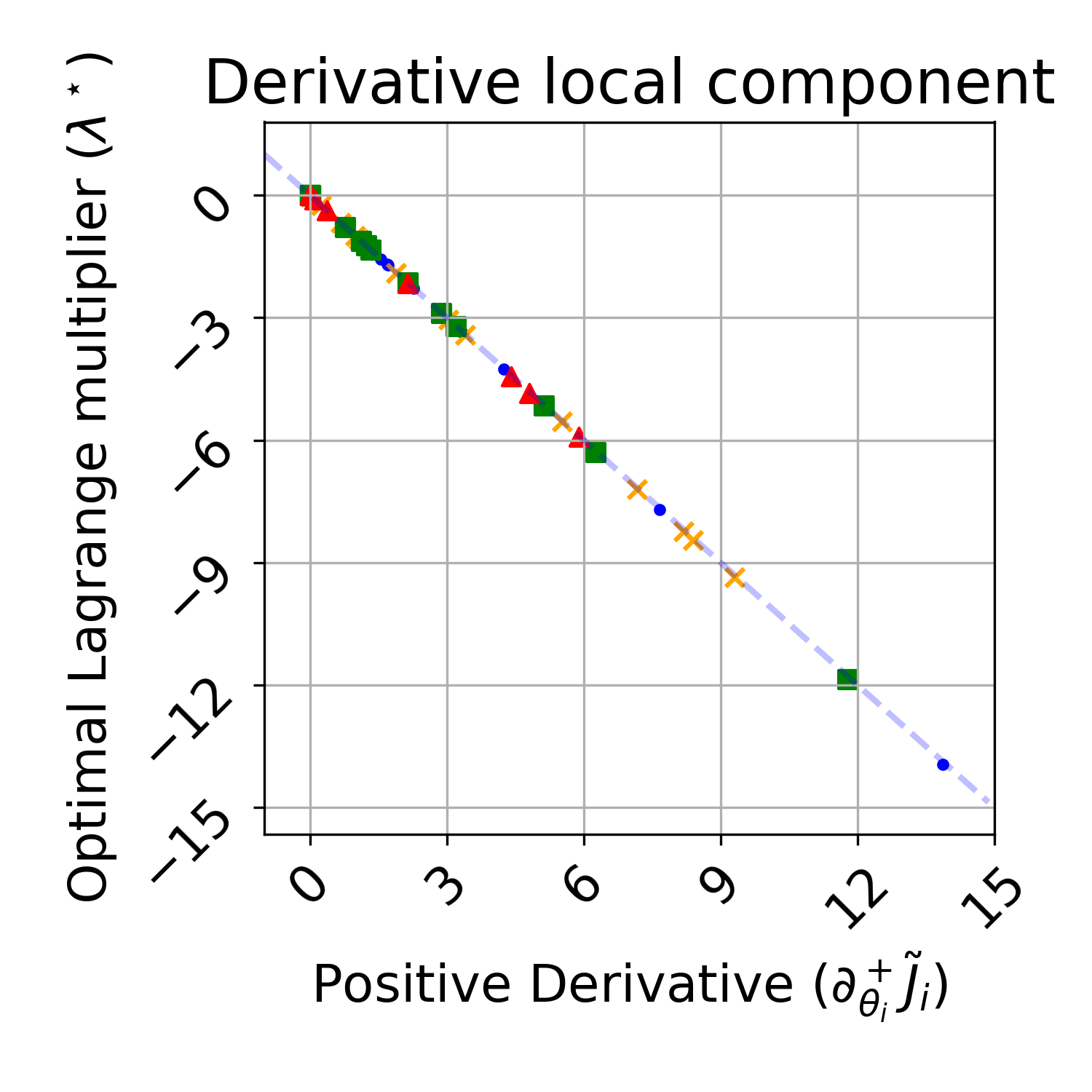}
        \caption{Numerical verification that the finite-difference derivative of the objective with respect to the local noise coincides with the negative of the optimal Lagrange multiplier, confirming exact correspondence up to solver precision.}
        \label{subfig:local approximate gradient component}
    \end{subfigure}
    \hfill
    \begin{subfigure}[t]{0.45\textwidth}
        \centering
        \includegraphics[width=\linewidth]{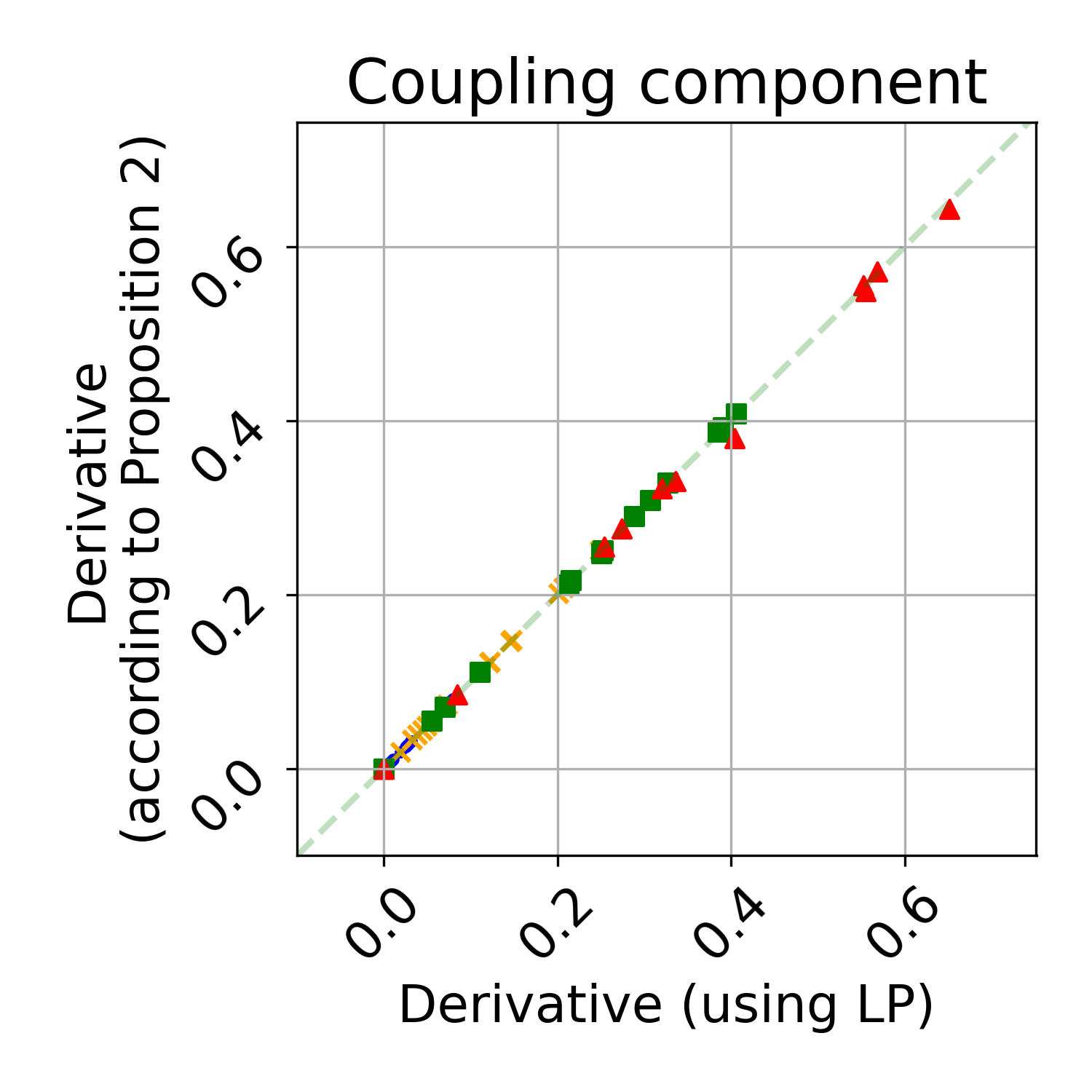}
        \caption{Numerical verification that the finite-difference derivative of the objective with respect to the local noise coincides with the value given by \cref{prop:sign derivatives objective function}, confirming exact correspondence up to solver precision.}
        \label{subfig:lagrange multiplier vs positive derivative}
    \end{subfigure}
    \caption{In these figures we evaluate an exact gradient of $\tilde{J}_i^\ell(\theta)$ using the finite difference method with a very small noise $\epsilon \in \{ -0.00001, 0.00001 \}$. }
    \label{fig:verification gradient exact values}
\end{figure}

\medskip

Additional experimental results are provided in the appendix, including the complete data underlying \cref{fig:evolution reward scalability comparison}, the evolution of offloading behavior in larger systems, comparisons with alternative baselines, and numerical evaluations of the theoretical results from \cref{lemma:bound approximation reward} using linear programming. Together, these supplementary experiments reinforce the conclusions drawn from the main results.

\section{Conclusions}
This paper introduced the DCC framework, a constraint-based approach to decentralized coordination in multi-agent reinforcement learning, and illustrated its potential in the context of task offloading at the wireless edge. Our focus here has been on laying the theoretical foundations and providing preliminary, proof-of-concept experiments to validate the core ideas. While the empirical evaluation is intentionally limited to toy models, the results suggest that constraint-driven implicit coordination can scale better than centralized methods and consistently outperform independent learners.

% We presented a decentralized reinforcement learning framework for coordinating agents in shared-resource environments, applied to task offloading in wireless edge systems. By modeling each agent as a constrained MDP and coordinating through a shared constraint vector, our method enables scalable, communication-efficient learning with theoretical guarantees.

% Experiments show that our approach performs competitively with centralized methods in small systems and scales more effectively in larger ones, consistently outperforming independent learners. Beyond offloading, the framework applies to problems such as dynamic spectrum access, power control, and distributed scheduling, where decentralized agents must share limited resources without explicit communication.

Future work includes extending the framework to support asynchronous updates, exploring richer forms of shared constraints, and conducting broader experiments to further validate the effectiveness of our approach. In this early work we abstract away the system-level implementation of constraint updates, but envision that in practice they could be coordinated through lightweight periodic broadcasts from edge servers or distributed consensus protocols among devices. Exploring these mechanisms remains a valuable direction for future work.

\newpage
\bibliographystyle{plain}
\bibliography{biblio}
\appendix
\newpage
\section{Theoretical proofs}
\subsection{Proof of \Cref{lemma:bound approximation reward}}
\label{subsec:proof bound approximation lemma}
\begin{comment}
First we repeat a result in \cite{simic2009new}:
\begin{theorem}
\label{thm:bound Jensen gap}
    Let $\tilde{p} = \{ p_i \}$ such that $\sum_i p_i = 1$, be a sequence of weights and $\tilde{x} = \{ x_i \}$ a finite sequence of real numbers in a closed interval $I = [a, b]$. Then, for any function $f$ which is convex over $I$ we have 
    \begin{equation}
    \label{eq:bound Jensen gap}
        \sum_i p_i f(x_i) - f\rp{ \sum_i p_i x_i} \leq f(a) + f(b) - 2 f\rp{\frac{a+b}{2}}
    \end{equation}
\end{theorem}

This helps us prove \cref{lemma:bound approximation reward}:
\end{comment}
\begin{proof}
% It is trivial to prove that, given a convex function $d$ on a bounded interval, it exists $\Lambda$ such that $\mid d'' \mid < \Lambda$.

% Moreover, we can observe how, for a twice differentiable function $d$ and a random variable $N$, it can be verified that $$\expvalDist{}{d(N)} - d(\expvalDist{}{N}) \leq \frac{1}{2} \Lambda Var(N)$$

% To conclude the proof, 
\begin{align*}
    J(\pi, \beta) - \hat{J}(\pi, \beta) & = \expvalDist{a_t \sim \pi}{\sum_t \gamma^t \sum_i r_i(s_t, a_t) \mid s_0 \sim \beta} - \expvalDist{a_t \sim \pi}{\sum_t \gamma^t \sum_i \hat{r}_i(s_t, a_t) \mid s_0 \sim \beta}\\
    &= \expvalDist{a_t \sim \pi}{\sum_t \gamma^t \sum_i \mathbb{I}_{a_{t, i} = a_{crowd}} \rp{d(N(a_t) - d(1 + \theta_{-i})}}\\
    &\leq \frac{1}{1-\gamma} \rp{ \sum_i \theta_i  \expvalDist{a_t \sim \pi}{d(N(a_t) - d(\expvalDist{}{N(t)}) \mid a_i = a_{crowd}} }
\end{align*}
In the case with linear penalty function $d$, we can easily conclude that $R_\pi(\beta) = \hat{R}_\pi(\beta)$.
For the nonlinear case, first we show that 
$$\expvalDist{a_t \sim \pi}{d(1 + \theta_{-i}) \mid a_i = a_{crowd}} = d(1 + \theta_{-i}) $$
When $d$ is convex, we can notice that 
\begin{align}
    \expvalDist{a_t \sim \pi}{d(N(a_t) \mid a_i = a_{crowd}}  \leq& \rp{1 - \frac{ \expvalDist{}{N_t\mid a_i = a_{crowd}}-1}{N_{agents}-1}} d(1) + \notag \\ 
    &\quad +\frac{ \expvalDist{}{N_t\mid a_i = a_{crowd}}-1}{N_{agents}-1} d(N_{agents})
    \notag  \\
    =& \frac{ \expvalDist{}{N_t\mid a_i = a_{crowd}}-1}{N_{agents}-1} d(N_{agents}) \notag\\
    =& \frac{ 1 + \theta_{-i} -1}{N_{agents}-1} d(N_{agents}) \notag\\
    =& \frac{\theta_{-i}}{N_{agents}-1} d(N_{agents})
\end{align}
Note that $d(1) = 0$ due to its definition.
This implies that, for convex penalty function $d$ it is verified that 
$$J(\pi, \beta) - \hat{J}(\pi, \beta) \leq \frac{1}{1-\gamma} \sum_i \theta_i \rp{\frac{\theta_{-i}}{N_{agents}-1} d(N_{agents}) - d(1 + \theta_{-i})}  $$
Finally, for concave penalty function $d$, we can easily prove that 
$$\expvalDist{a_t \sim \pi}{d(N(a_t) \mid a_i} \geq \frac{\theta_{-i}}{N_{agents}-1} d(N_{agents})$$

and therefore conclude that, for every nonlinear penalty function $d$ it it verified that 
\begin{equation}
    \left| J(\pi, \beta) - \hat{J}(\pi, \beta) \right| \leq \frac{1}{1-\gamma} \sum_i \theta_i \rp{\frac{\theta_{-i}}{N_{agents}-1} d(N_{agents}) - d(1 + \theta_{-i})} 
    \label{eq:bound error non linear penalty}
\end{equation}
    
\end{proof}

\subsection{Theoretical background for the proof of \cref{lemma:component objective function differentiable}}
Before proving the main result, we first recall the relevant theoretical background, including the statement of Theorem~\ref{thm:envelope theorem constrained problem}. For clarity, we briefly review the envelope theorem, which applies broadly to optimization problems with parameterized constraints and objective functions defined over choice sets with arbitrary topology. A full treatment can be found in Section 3.5 of~\cite{milgrom2002envelope}.

Consider the following maximization program with $k$ parameterized inequality constraints:
\begin{align}
    &V(t) = \sum_{x \in X: g(x, t) \geq 0} f(x, t), \quad \text{where } g: X \times [0, 1] \to \mathbb{R}^k
    \label{eq:constrained maximization program}
    \\
    &X^\star(t) = \{ x \in X : g(x, t) \geq 0, f(x, t) = V(t) \}. \notag
\end{align}
In the reference framework used by \cite{milgrom2002envelope}, $X$ is a convex set, $f$ and $g$ are such that zero duality gap holds, and the Slater constraint qualification is verified at some $\hat{x} \in X$, i.e., $g_h(\hat{x}, t) > 0$ for all $h=1,\ldots,k$. 

The set of saddle points of the Lagrangian over $(x, y) \in X \times \mathbb{R}^k_+$ at parameter value $t$ takes the form $X^\star (t) \times Y^*(t)$, where $X^*(t)$ is the set of solution to \eqref{eq:constrained maximization program} and $Y^*(t)$ is the set of the solutions of the dual program $$Y^*(t) = \arg\min_{y \in \mathbb{R}^*_k} \rp{ \sup_{x \in X} L(x, y, t) }$$

When the zero duality gap condition holds, the value $V(t)$ of the constrained maximization problem equals the saddle of the Lagrangian with parameter $t$, i.e., $V(t)=\min_{y \in \mathbb{R}^k_+} \max_{x \in X} L(x, y, t)=\max_{x \in X} \min_{y \in \mathbb{R}^k_+}  L(x, y, t)$ 

The following result, which is denoted as Corollary 5 in \cite{milgrom2002envelope}, is crucial to prove the differentiability of \eqref{eq:objective function slow timescale}. 
\begin{theorem}
\label{thm:envelope theorem constrained problem}
    %Suppose that $X$ is a convex compact set in a normed linear space, $f$ and $g$ are continuous and concave in $x$,
    Suppose that $X$ is a convex compact set in a normed linear space, $f$ and $g$ are continuous in $x$, for the Lagrangian zero duality condition holds, $f_t(x, t)$ and $g_t(x, t)$ are continuous in $(x,t)$ , and there exists $\hat{x} \in X$ such that $g(\hat{x}, t)  \gg 0$ for all $t \in [0, 1]$. Then:
    \begin{enumerate}
        \item $V$ is absolutely continuous and for any selection $\rp{x^*(t), y^*(t)} \in X^* (t) \times Y^*(t)$, $$V(t) = V(0) + \int_0^t L_t(x^*(s), y^*(s), s) ds$$
        \item $V$ is directionally differentiable, and its directional derivatives equal:
        \begin{align*}
            V'(t+) &= \max_{x \in X^*(t)} \min_{y \in Y^*(t)} L_t(x, y, t) = \min_{y \in Y^*(t)} \max_{x \in X^*(t)} L_t(x, y, t) \quad \text{for } t<1 \\
            V'(t-) &= \min_{x \in X^*(t)} \max_{y \in Y^*(t)} L_t(x, y, t) = \max_{y \in Y^*(t)} \min_{x \in X^*(t)} L_t(x, y, t) \quad \text{for } t>0
        \end{align*}
    \end{enumerate}
\end{theorem}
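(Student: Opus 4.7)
The plan is to prove the theorem by reducing the constrained program to a saddle-point problem for the Lagrangian $L(x,y,t)=f(x,t)+y^{\top}g(x,t)$ and applying the unconstrained envelope theorem of Milgrom--Segal (their Theorem~1) on each side of the saddle. The zero duality gap hypothesis gives
\[
V(t)\;=\;\max_{x\in X}\min_{y\in\mathbb{R}^{k}_{+}}L(x,y,t)\;=\;\min_{y\in\mathbb{R}^{k}_{+}}\max_{x\in X}L(x,y,t),
\]
so every pair $(x^{*}(t),y^{*}(t))\in X^{*}(t)\times Y^{*}(t)$ is a saddle point at parameter $t$ and simultaneously solves the inner max and the outer min. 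All subsequent steps will use this equivalence to move derivatives under $\max$ and $\min$.

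The first step I would carry out is a uniform boundedness estimate on $Y^{*}(t)$ for $t\in[0,1]$. This is exactly where the Slater qualification $g(\hat{x},t)\gg 0$ enters: for any $y\in Y^{*}(t)$, the saddle inequality $L(\hat{x},y,t)\le V(t)$ rearranges to $y^{\top}g(\hat{x},t)\le V(t)-f(\hat{x},t)$, and continuity of $g$ together with the strict-positivity hypothesis on the compact $[0,1]$ yields a uniform $\delta>0$ with $g_{h}(\hat{x},t)\ge\delta$. Combined with continuity of $f$ on $X\times[0,1]$ and compactness of $X$, one obtains a constant $C$ with $\|y\|_{\infty}\le C$ for every $y\in\bigcup_{t}Y^{*}(t)$. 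This bound is the linchpin of the whole argument and is the main obstacle, since without it neither the integrand nor the minimax formulas below are guaranteed to be finite.

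Given the uniform bound, I would derive the directional derivative formula by applying the standard unconstrained envelope theorem separately to the two representations of the saddle value over the now-compact parameter set $[0,C]^{k}$. Define $\phi(t,y)=\max_{x\in X}L(x,y,t)$ and $\psi(t,x)=\min_{y\in[0,C]^{k}}L(x,y,t)$; since $L_{t}$ is continuous on $X\times[0,C]^{k}\times[0,1]$ and the parameter spaces are compact, each is directionally differentiable with derivative equal to the envelope of $L_{t}$ over the appropriate argmin/argmax. A minimax-equality argument, using the bilinearity of $L$ in $y$, the compactness and convexity of $Y^{*}(t)$ (a consequence of zero duality gap), and the compactness of $X^{*}(t)$, yields
\[
V'(t+)\;=\;\max_{x\in X^{*}(t)}\min_{y\in Y^{*}(t)}L_{t}(x,y,t)\;=\;\min_{y\in Y^{*}(t)}\max_{x\in X^{*}(t)}L_{t}(x,y,t),
\]
and symmetric reasoning with sup and inf swapped produces the formula for $V'(t-)$.

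For the first claim, absolute continuity and the integral representation follow from the uniform multiplier bound. The integrand $s\mapsto L_{t}(x^{*}(s),y^{*}(s),s)$ is measurable, by a Kuratowski--Ryll-Nardzewski selection applied to the closed, compact-valued saddle correspondence $s\mapsto X^{*}(s)\times Y^{*}(s)$, and is bounded on $[0,1]$ by $\sup|L_{t}|$ over the compact set $X\times[0,C]^{k}\times[0,1]$. An equi-Lipschitz estimate on $t\mapsto L(x,y,t)$ uniform on this compact set shows $V$ is Lipschitz, hence absolutely continuous, so the fundamental theorem of calculus together with the directional-derivative formula of part~(2) yields the integral identity $V(t)=V(0)+\int_{0}^{t}L_{t}(x^{*}(s),y^{*}(s),s)\,ds$ for any measurable saddle-point selection, completing the proof.
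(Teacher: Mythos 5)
The paper does not actually prove this statement: it is quoted verbatim as Corollary 5 of Milgrom and Segal (2002), and the only justification offered is that the Slater condition makes the optimal dual variables bounded, so that their Theorems 4 and 5 apply. Your proposal identifies exactly that multiplier bound as the linchpin and then reconstructs the cited proof along the same lines (reduce to a saddle-point problem over a compact multiplier set, apply the one-sided envelope theorem to each representation of the value, and obtain absolute continuity from an equi-Lipschitz bound plus a measurable selection), so it is essentially the same approach, carried out in full rather than by citation. One caveat on your step for the directional derivatives: the envelope theorem applied to $\phi(t,y)=\max_{x}L(x,y,t)$ and $\psi(t,x)=\min_{y}L(x,y,t)$ gives envelopes of $L_t$ over $\arg\max_x L(\cdot,y^*,t)$ and $\arg\min_y L(x^*,\cdot,t)$, which may strictly contain $X^*(t)$ and $Y^*(t)$, and the passage to the saddle sets should not be closed by a minimax theorem --- under the stated hypotheses $L_t$ need not be concave in $x$ and $X^*(t)$ need not be convex, so Sion-type arguments do not apply. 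The correct (and simpler) closing move, which is the one in Milgrom--Segal, is a sandwich: the two envelope applications yield $\min_{y\in Y^*(t)}\max_{x\in X^*(t)}L_t \le V'(t+) \le \max_{x\in X^*(t)}\min_{y\in Y^*(t)}L_t$, and since $\max\min\le\min\max$ always holds, all three quantities coincide; everything you need for this is already established in your sketch.
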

The result holds since optimal dual variables are proved bounded, and hence from Theorems 4 and 5 in \cite{milgrom2002envelope} can be applied.

\subsection{Proof of \cref{lemma:component objective function differentiable}}
\label{subsec:proof differentiability objective function}
\begin{proof}
In order to prove the desired result for the function $\hat{J}_i(\theta)$ we need to use \cref{thm:envelope theorem constrained problem} and make a distinction between two separate cases, $t=\theta_i$ and $t = \theta_j$.\\
We consider $$V(t)=\min_{y \in \mathbb{R}^k_+} \max_{x \in X} L(x, y, t)=\max_{x \in X} \min_{y \in \mathbb{R}^k_+}  L(x, y, t)$$
Next we define how all the quantities involved in the two different cases.

\subsubsection{$\boldsymbol{t = \theta_i}$}
\label{subsubsec:theta_i}
In this case, all the values of $\theta_j$ for $j\neq i$ are fixed and simply define the function $f$.\\
Consider the following:
\begin{itemize}
    \item $X = L^\gamma (\beta)$: this is the set of stationary distributions given an initial distribuiton $\beta$ and the discount factor $\gamma$
    \item it is safe to assume the existence of a vector $\theta_{max} \in \mathbb{R}$ such that, $\theta_i < \theta_{max}$. This allows us to normalize the values of the parameter $\theta$. A possible value of $\theta_{max}$ could be the cost obtained when always choosing the crowded action. In general, as long as the cost is a bounded function, this property is easily verified
    \item $f(x, t) = f_{\theta_{-i}}(\rho) =  \sum_{s, a} \rho_i(s, a) \hat{r}_i (s, a ; \theta_{-i})$, with $\hat{r}_i : \S \times \A \times [0, 1]^{N-1} \to \mathbb{R}$ defined by the fixed parameters $\theta_{-i}$
    \item $g(x, t) = g(\rho, \theta_i) =  \sum_{s, a} \rho(s, a) c(s, a) - \frac{\theta_i}{\theta_{max}}$, with with $c: \S \times \A \to \mathbb{R}^N$
    \item the Lagrangian is explicitly written as $$L_i(\rho, \lambda, \theta_i) = f_{\theta_{-i}}(\rho) + \lambda_i g(\rho, \theta_i)$$assuming we are studying the differentiability wrt $\theta_i$
\end{itemize}

Now we show that the hypotheses of \Cref{thm:envelope theorem constrained problem} are verified.

\paragraph{X is a convex compact set in a normed linear space}
This choice of the space of the probability distributions, allows us to prove that $X$ is a convex compact set in a normed linear space, as proved in Corollary 10.1 in \cite{altman2021constrained}.

\paragraph{$\boldsymbol{f}$ and $\boldsymbol{g}$ are continuous and concave in $\boldsymbol{x}$}
The continuity and concavity in $x$ of the functions $f$ and $g$ is an immediate consequence of the definition given above.

\paragraph{$\boldsymbol{f_t(x, t)}$ and $\boldsymbol{g_t(x, t)}$ are continuous in $\boldsymbol{(x, t)}$}
The immediate reward function does not depend on $\theta_i$, therefore $f_t(\cdot) = 0$. On the other hand, $g_t(\cdot) = \frac{1}{\theta_{max}}$

\paragraph{Existence of $\boldsymbol{\hat{x} \in X}$ such that $\boldsymbol{g(\hat{x}, t) \gg 0}$ for all $\boldsymbol{t \in [0, 1]}$}

The existence of a point $\hat{x} \in X$ such that $g(\hat{x}, t) \gg 0$ for all $t \in [0, 1]$ corresponds to the existence of a stationary distribution that strictly satisfies all constraints. This condition is met, for example, by a policy that never selects the crowded action—ensuring all constraints are strictly satisfied—provided that $\theta_i > 0$.

\subsubsection{$\boldsymbol{t = \theta_j}$}
\label{subsubsec:theta_j}
In this case, all the values of $\theta_k$ for $k\neq j$ are fixed; they will be considered fixed parameters in the definition of $f$ and $g$.
Consider the following:
\begin{itemize}
    \item $X = L^\gamma (\beta)$ this is the set of stationary distributions
    \item it is safe to assume the existence of a vector $\theta_{max} \in \mathbb{R}$ such that, $\theta_i < \theta_{max}$. This allows us to normalize the values of the parameter $\theta$. A possible value of $\theta_{max}$ could be the cost obtained when always choosing the crowded action. In general, as long as the cost is a bounded function, this property is easily verified
    \item $f(x, t) = f_{\theta_{-i, j}}(\rho, \theta_j) =  \sum_{s, a} \rho_i(s, a) \hat{r}_i (s, a ; \theta_{-i, j}, \theta_j)$, with $\theta_{-i, j} = \sum_{k \neq i, j} \theta_k$
    \item $g(x, t) = g_{\theta_i}(\rho) =  \sum_{s, a} \rho(s, a) c(s, a) - \frac{\theta_i}{\theta_{max}}$, with with $c: \S \times \A \to \mathbb{R}^N$
    \item the Lagrangian is explicitly written as $$L_i(\rho, \lambda, \theta_j) = f_{\theta_{-i, j}}(\rho, \theta_j) + \lambda_i g_{\theta_i}(\rho)$$assuming we are studying the differentiability wrt $\theta_i$
\end{itemize}

To conclude our proof, it suffices to show that the hypotheses of \Cref{thm:envelope theorem constrained problem} are verified for both choices of the parameter.

\paragraph{X is a convex compact set in a normed linear space}
This choice of the space of the probability distributions, allows us to prove that $X$ is a convex compact set in a normed linear space, as proved in Corollary 10.1 in \cite{altman2021constrained}.

\paragraph{$\boldsymbol{f}$ and $\boldsymbol{g}$ are continuous and concave in $\boldsymbol{x}$}
The continuity and concavity in $x$ of the functions $f$ and $g$ is an immediate consequence of the definition given above.

\paragraph{$\boldsymbol{f_t(x, t)}$ and $\boldsymbol{g_t(x, t)}$ are continuous in $\boldsymbol{(x, t)}$}
The continuity of $f_t(x, t)$ in $(x, t)$ is an immediate consequence of the definition of the reward function $r_i$ and the assumption that it is continuous with regard to $\theta_{-i}$ and therefore it is differential wrt $\theta_j, \ \forall  j\neq i$. On the other hand, $g_t(\cdot) = 0$

\paragraph{Existence of $\boldsymbol{\hat{x} \in X}$ such that $\boldsymbol{g(\hat{x}, t) \gg 0}$ for all $\boldsymbol{t \in [0, 1]}$}

The existence of a point $\hat{x} \in X$ such that $g(\hat{x}, t) \gg 0$ for all $t \in [0, 1]$ corresponds to the existence of a stationary distribution that strictly satisfies all constraints. This condition is met, for example, by a policy that never selects the crowded action—ensuring all constraints are strictly satisfied—provided that $\theta_j > 0$.

\subsubsection{Conclusion of the proof}
Thanks to what is showed in \cref{subsubsec:theta_i,subsubsec:theta_j} we know that the objective function $V(t)$ is absolutely continuous with regard to $t$, for all choices of $t$.\\
Consider now the function $$\hat{J}_i^\rho (\theta) = \expvalDist{s, a \sim \rho^\star_i}{\hat{r}_i(s, a; \theta_{-i)}}$$
where $\rho^\star_i$ depends on $\theta$.\\

When we fix all coordinates $\theta_k$ for $k \neq j$, the mapping
$$ \theta_j \mapsto \hat{J}_i^\rho (\theta)$$
coincides with $V(\theta_j)$, which is known to be absolutely continuous.\\
It then follows that $\hat{J}_i^\rho (\theta)$ is differentiable a.e.\\
Finally, note how we can use Theorem 3.3 in \cite{altman2021constrained} to show how an optimal solution $\rho^\star$ for LP is such that the stationary policy $\pi(\rho^\star)$ is optimal for the original constrained problem. 
This implies that, from the optimal stationary distribution we can also retrieve the optimal policy.

This allows us to conclude that the function $\hat{J}_i(\theta)$ is differentiable wrt every component of $\theta$ and therefore that \cref{thm:convergence algorithm} is verified for the problem studied.

\end{proof}

\subsection{Proof of \cref{prop:sign derivatives objective function}}
\label{subsec: proof sign derivatives}

\begin{proof}
    \Cref{thm:envelope theorem constrained problem} states that $V$ is directionally differentiable, and its directional derivatives equal:
    \begin{align*}
        V'(t+) &= \max_{x \in X^*(t)} \min_{y \in Y^*(t)} L_t(x, y, t) = \min_{y \in Y^*(t)} \max_{x \in X^*(t)} L_t(x, y, t) \quad \text{for } t<1 \\
        V'(t-) &= \min_{x \in X^*(t)} \max_{y \in Y^*(t)} L_t(x, y, t) = \max_{y \in Y^*(t)} \min_{x \in X^*(t)} L_t(x, y, t) \quad \text{for } t>0
    \end{align*}
    % In our context, for agent~$i$, the function $V(t)$ corresponds to $\hat{J}_i(\theta)$, while the function $L$ is defined as % represents the function $\hat{J}_i(\theta, \lambda_i, \pi_i)$ considered also in the main proof:
    % $$L(\rho, \lambda, \theta_i, \theta_{-i}) = \sum_{s, a} \rho_i(s, a) \hat{r}_i(s, a; \theta_{-i}) + \lambda \rp{\sum_{s, a}\rho_i(s, a) c_i(s, a) - \frac{\theta_i}{\theta_{max}}}$$

    When considering the parameter $t = \theta_i$ and $\theta_j$ fixed for $j \neq i$, we know that \begin{align*}
    L(\rho, \lambda, \theta_i) &= f_{\theta_{-i}}(\rho) + \lambda_i g(\rho, \theta_i) \\
    &= \sum_{s, a} \rho_i(s, a) \hat{r}_i (s, a ; \theta_{-i, j}, \theta_j) + \lambda_i \rp{\sum_{s, a} \rho(s, a) c(s, a) - \frac{\theta_i}{\theta_{max}}}
    \end{align*}
    It is easily verified that $\frac{\partial L (\theta_i)}{\partial \theta_i} = - \frac{\lambda}{\theta_{max}}$ and therefore when $\theta_i \in (0, \theta_{max})$ it is verified
    \begin{align*}
        V'(\theta_i+) &= -\max_{\rho \in L^\gamma (\beta)} \min_{\lambda \in \mathbb{R}} \frac{\lambda}{\theta_{max}}\\
        V'(\theta_i+) &= -\min_{\rho \in L^\gamma (\beta)} \max_{\lambda \in \mathbb{R}} \frac{\lambda}{\theta_{max}}
    \end{align*}
    As noted in \cite{milgrom2002envelope}, this is a special case for which it is verified that $$V'(t) = V'(\theta_i+) = V'(\theta_i-) = - \frac{\lambda^\star}{\theta_{max}} \leq 0 \text{ a.e.}$$
    This yields the corresponding result, with $M = \theta_{max}$.\\\\
    
    On the other hand, when $t = \theta_j$, with $j\neq i$, we have
    \begin{align*}
    L(\rho, \lambda, \theta_j) &= f_{\theta_{-i, j}}(\rho, \theta_j) + \lambda_i g_{\theta_i}(\rho) \\
    &= \sum_{s, a} \rho_i(s, a) \hat{r}_i (s, a ; \theta_{-i, j}, \theta_j) + \lambda_i \rp{\sum_{s, a} \rho(s, a) c(s, a) - \frac{\theta_i}{\theta_{max}}}
    \end{align*}
    It then follows that
    \begin{align*}
        \frac{\partial }{\partial \theta_j} L(\rho, \lambda, \theta_j)         &= \frac{\partial}{\partial \theta_j} \sum_{s_i, a_i} \rho_i(s_i, a_i) \hat{r}_i (s_i, a_i) \\
        &= \frac{\partial}{\partial \theta_j} \sum_{s_i, a_i} \rho(s_i, a_i) \left( u_i(s_i, a_i) +  \mathbb{I}_{a = a_{\text{crowd}}}(a_i)  d(1 + \theta_{-i})\right)\\
        &= \frac{\partial}{\partial \theta_j} \sum_{s_i} \rho(s_i, a_{crowd}) d(1 + \theta_{-i}) \\ % \geq 0 \quad \forall \rho, \lambda\\
        &= \sum_{s_i} \rho(s_i, a_{crowd}) \frac{\partial}{\partial \theta_j} d(1 + \theta_{-i}) \geq 0 \quad \forall \rho, \lambda
    \end{align*}
    where the final inequality follows from the assumption that the function $d$ is strictly increasing, i.e. $d' > 0$. Moreover, not how the derivative is equal to 0 if and only if the action $a_{crowd}$ is never chosen.\\
    Therefore, we can further conclude that 
    \begin{align*}
    \frac{\partial}{\partial \theta_j} \hat{V} (\theta_j)
    =& \max_{\rho} \min_{\lambda} \sum_{s_i} \rho(s_i, a_{crowd}) \frac{\partial}{\partial \theta_j} d(1 + \theta_{-i})\\
    =& \sum_{s_i} \rho^\star(s_i, a_{crowd}) \frac{\partial}{\partial \theta_j} d(1 + \theta_{-i}) \\
    =& \mathbb{P}\rp{ a_{i} = a_{crowd} \mid a_{i, t} \sim \pi_i^\star} \cdot \frac{\partial}{\partial \theta_j} d(1 + \theta_{-i})\\
    =& \theta_i  \frac{\partial}{\partial \theta_j} d(1 + \theta_{-i})
    \end{align*}
    where the optimal stationary distribution $\rho^\star$ is a function of the constraint vector $\theta$.    
\end{proof}

\newpage

\section{Toy model considered}
\label{sec:appendix toy model description}
\paragraph{Description of the toy model}
%%%%%%%%%%%%%%%%%%%%%%%%%%%%%%%%%%%%%%%%%%%%%%
%%%%%%%%%%%%%%%%%%%%%%%%%%%%%%%%%%%%%%%%%%%%%%%%%%%%%%%%%%%%%%%%%%%%%%%%%%%%%%%%%%%%%%%%%%
% EDIT: added Markov chain formulation to describe the energy units required to process data
\begin{figure}[t]
    \centering
    \includegraphics[width = .65 \columnwidth]{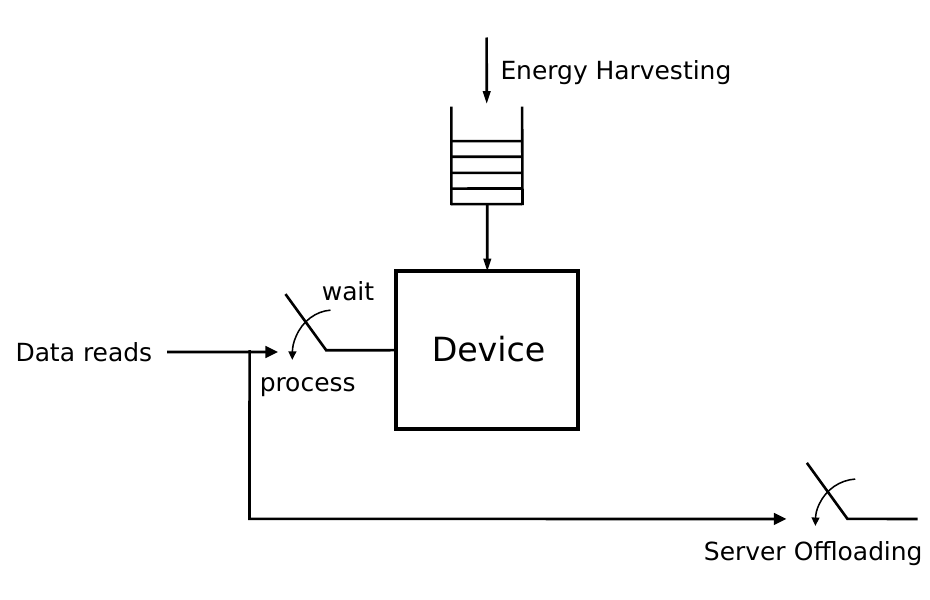}
    \caption{\small{Device performing local processing of data batches with energy harvesting and offloading.}}
    \label{fig:system}
\end{figure}

The device-server scheme is represented in Fig.~\ref{fig:system} for a single device: the device can read data batches, process them locally or offload them to the edge server. Thus, data batches are either read and processed locally on the device or read and offloaded. Time steps are discrete with index $t=1,2,\ldots$. To process data on the device at time $t$, a certain amount of energy units, denoted as $C_t$, is required. $C_t$ is modeled as a Markov chain. When the device offloads to the edge server, the data processing task is offloaded and it has zero energy cost for the device.
However, sending and retrieving processed data from the server may take some time, which depends on how many devices have offloaded the data. This will result in a reward that accounts for the delay in receiving the processed data. The energy to transmit data to the server is assumed negligible compared to that spent for local data processing.
The device is equipped with a battery of capacity $B > 0$ energy units. Additionally it can harvest a certain number of energy units $H_t$ per timestep $t$. As for the processing cost, the energy fetched per timestep, namely the {\em harvesting rate}, is described by a Markov chain.

\paragraph{Description of the CMDP}
For each device we can introduce a Constrained Markov Decision Process (MDP) to model the system. The state of the device $i$ at time $t$ is denoted as $s_{i, t} =(x_{i, t}, ,e_{i, t})$, where $x_{i, t}$ represents the age of information (AoI), $e_{i, t}$ is the device battery level. 

Note that $x_{i, t}$  is the AoI of the last data batch processed by a tagged device and the AoI is measured at the end of each time slot. It holds $x_{i, t}=1$ when the device has just processed fresh data. Afterwards, the AoI increases of one timestep at every time slot $t$ until either the device fetches a new data batch or the offloading occurs. In both cases, at the end of the data processing the AoI is reset to $1$. 

The freshness function, denoted as $u_i(x)$, represents the utility of processing a data batch $x$ time units after the processing of the last batch. The function $u_i(\cdot)$ is bounded and non-increasing which means that beyond a certain value of AoI, any further processing delay would have minimal impact on the utility derived from the data. Consequently, it is assumed that there exists a positive constant $M$ such that $u_i(M+k) = u_i(M)$ for any positive value of $k$, as it allows us to focus on the relevant time window $\{ 1, \dots, M \}$ where the utility function exhibits meaningful changes in value in relation to the freshness of the data. 

Finally, it is possible that the battery available at timeslot $t$ is not sufficient to terminate the computation immediately, namely, $e_t + H_t - C_t < 0 $. In this case, a delay is incurred in order to harvest a sufficient amount of energy and complete the processing of the current batch. The corresponding timeslot has a random duration, due to the stochastic nature of energy harvesting.

The state space of agent $i$ is denoted $\S_i=\{1,\ldots,M\}\times \{0,\ldots,B\}$. 
\begin{figure}[t]
    \centering
    \includegraphics[scale = 0.4]{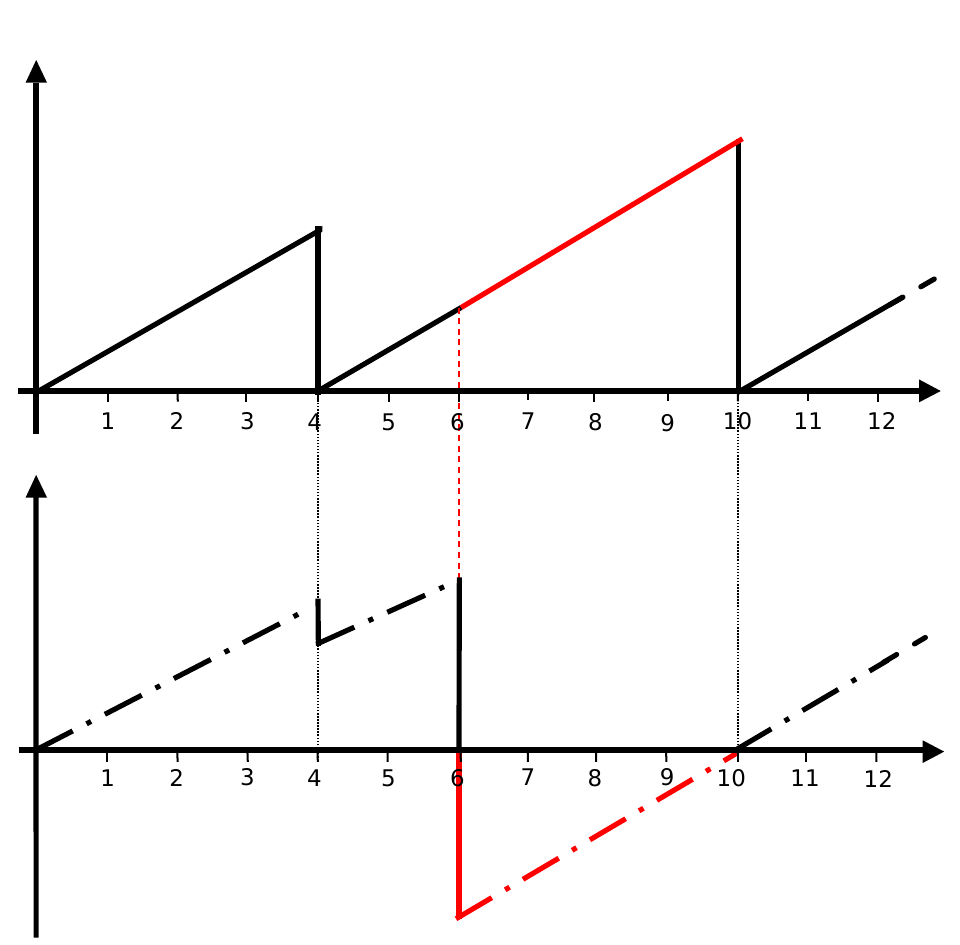}
    \put(-190,160){$x_t$}
    \put(-190,85){$e_t$}
    % \put(-460,203){$x_t$}\put(-235,97){$e_t$}
    % \put(-10,25){$t$}\put(-10,110){$t$}
    \put(-125,72){\begin{rotate}{90}{\small $a_4\!=\!1$}\end{rotate}}
    \put(-98,72){\begin{rotate}{90}{\small $a_6\!=\!1$}\end{rotate}}
    \put(-100,5){\begin{rotate}{90}{\small $\phi_6 > 0$}\end{rotate}}
    % \put(-92,68){\begin{rotate}{90}{\small $a_{12}\!=\!1$}\end{rotate}}\put(-86,103){\small $\phi_{12}$}
    %  \put(-32,68){\begin{rotate}{90}{\small $a_{18}\!=\!1$}\end{rotate}}\put(-40,103){\small $\phi_{18}$}
     % \put(-2,70){\begin{rotate}{90}{\small $a_{21}\!=\!1$}\end{rotate}}
    \caption{A sample path of the process $s_t=(x_t,e_t)$ for $H \equiv 1$ for a single agent: in red the vacation periods where the battery is empty. After the battery level gets negative at time $t=6$ a total of $4$ timesteps of recharging are needed to complete the execution of the task.}
    \label{fig:renewal}
\end{figure}

The action set is $\A_i=\{\text{"read", "local processing", "offload" }\}$. The action taken by agent $i$ at time $t$ is denoted $a_{i, t}$. If $a_{i, t}=1$,  device $i$ fetches a new data batch, which is processed at energy cost $C_t>0$. Finally, the dynamics of the AoI for data batches is 
\begin{equation}
x_{t+1}= 
\begin{cases} 
\;1 & \quad \mbox{if} \; a_t \in \{ \text{"local processing" and } e_{i, t} + H_{i, t} - C_{i, t} \geq 0 , \text{"offloading"} \}  \\
\;\min \{ x_t+1, M \} & \quad \mbox{if} \; a_t \in \{ \text{"read"}, \text{"local processing" and } e_{i, t} + H_{i, t} - C_{i, t} < 0 \} \\
\end{cases}\nonumber
\end{equation}
The AoI at the renewal instants is also called {\em peak AoI} (\cite{barakat2019measure}) in the literature.

As described in \cref{sec:system model} the reward function is composed of two components: a local utility function and a component which represents the congestion.
For the local components, we consider the reward function which penalizes having negative battery level:
\begin{equation}
    u_i(s_i, a_i) = 
    \begin{cases}
        x_i - e_i \quad& e_i <0\\
        x_i \quad &e_i \geq 0
    \end{cases}
    \label{eq:local utility function}
\end{equation}
For the sake of simplicity, for this work we consider a simple function also for the penalty component which depends on the congestion. In particular, 
\begin{equation}
    d(n) = n ^ \alpha
\end{equation}
with $\alpha \in \mathbb{R}$. Clearly, for $\alpha = 1$ we are considering the case with linear penalty. 

Finally, the cost function is defined as 
\begin{equation}
    c_i(s_i, a_i) = \begin{cases}
        1 \quad a_i = \text{"offloading"}\\
        0 \quad \text{otherwise}
    \end{cases}
\end{equation}

\newpage
\section{Additional numerical experiments}
We introduce some additional experiments. Whenever reported, the reward is always normalized so that 1 corresponds to the discounted reward obtained by \icAlg after $10^5$ learning steps.

\subsection{Evolution scalability comparison: additional data}
We plot the reward evolution corresponding to the boxplot in \cref{subfig:scalability comparison}. This highlights why MAPPO results were excluded from the main figure to avoid distortion. The plot also shows that IQL converges quickly due to the small system size, while \icAlg continues to improve over time as the constraint approaches its optimal value.
\begin{figure}[h]
    \centering
    \includegraphics[width=0.99\linewidth]{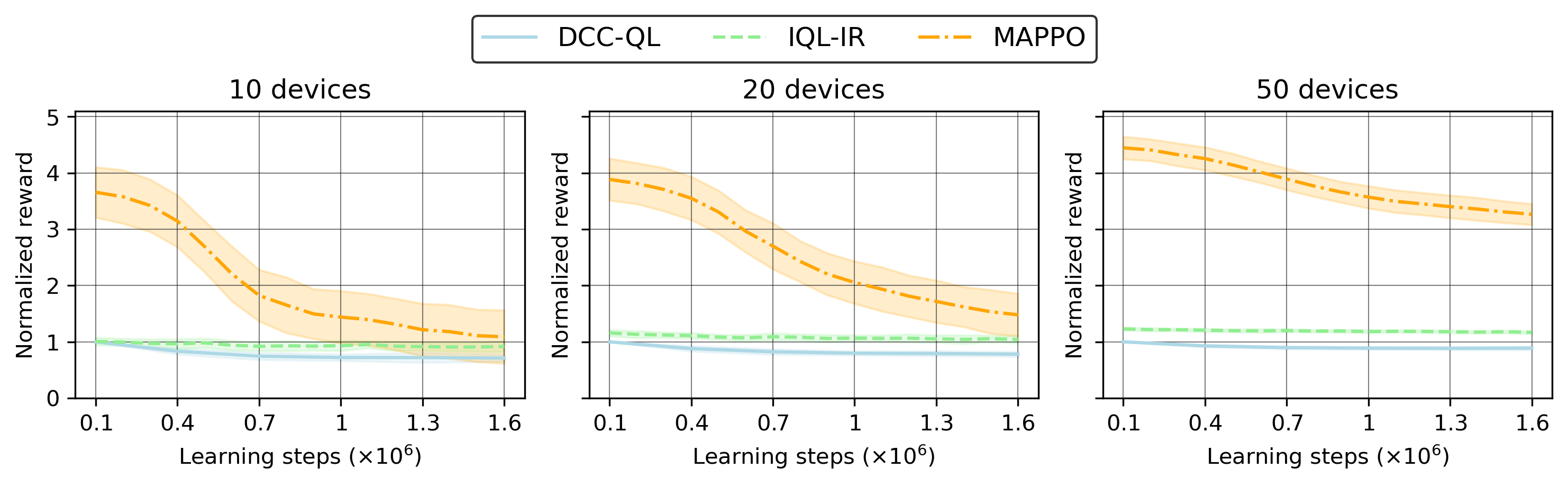}
    \caption{Evolution of the normalized reward over training episodes for different methods (corresponding to the boxplot in \cref{subfig:scalability comparison}).}
    \label{fig:evolution reward scalability comparison}
\end{figure}

\subsection{Evolution offloading action: additional data}
For completeness, we also plot the corresponding results in \cref{subfig:offloading action comparison} for $N=20$ and $N=50$.
\begin{figure}[h]
    \centering
    \includegraphics[width=0.99\linewidth]{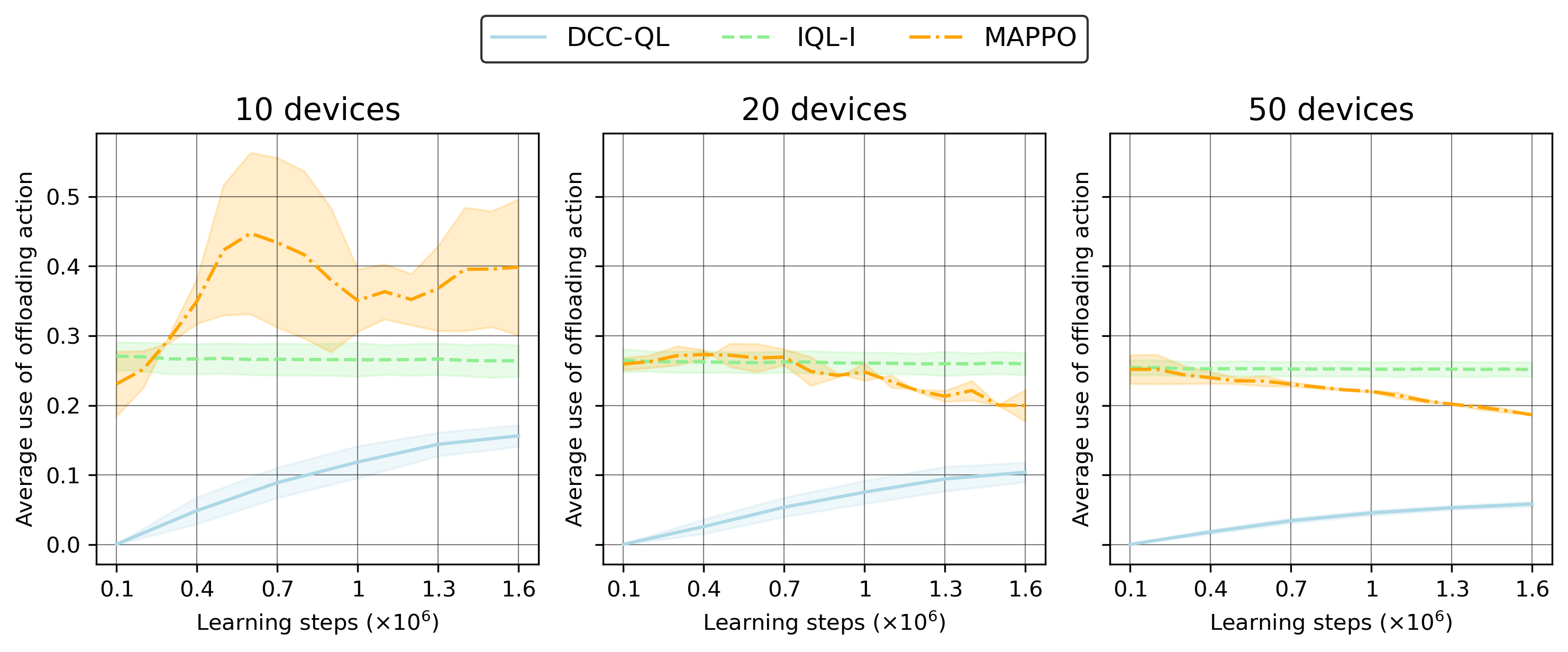}
    \caption{Evolution of the frequency of the offloading action in systems with $10, 20$ and $50$ devices.}
    \label{fig:comparison offloading action complete}
\end{figure}
\subsection{Comparison with MAPPO over longer timeframes}
We include this comparison in \Cref{fig:comparison scalability longer MAPPO} to illustrate how MAPPO improves with extended training. While MAPPO achieves lower rewards after $10^7$ learning steps, this comes at the cost of substantially more samples. In contrast, \icAlg makes better use of available data thanks to its Q-learning foundation, achieving strong performance with far fewer interactions. Even with extended training, \icAlg remains consistently better across system sizes, highlighting its sample efficiency and robustness.

\begin{figure}[h]
    \centering
    \includegraphics[width=0.9\linewidth]{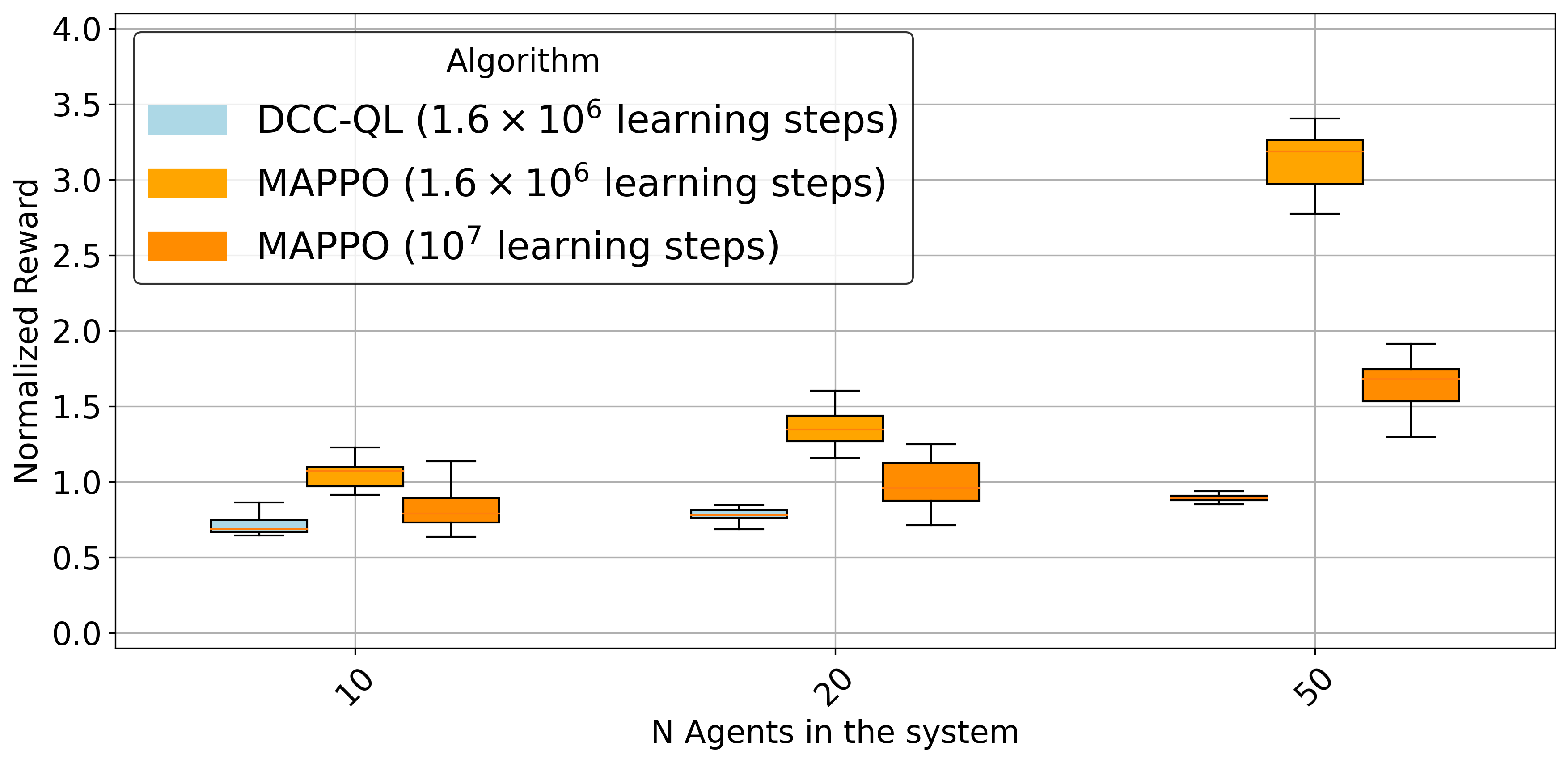}
    \caption{Comparison of \icAlg with MAPPO after $1.6 \times 10^6$ learning steps (as in \cref{subfig:scalability comparison}) and after $10^7$ learning steps.}
    \label{fig:comparison scalability longer MAPPO}
\end{figure}

\subsection{Additional algorithms for \cref{subfig:scalability comparison}}
For completeness, \cref{fig:full comparison scalability} also reports results for IQL with a common reward and for a centralized baseline using A2C. Sharing a global reward signal can, in principle, improve coordination among agents by aligning their incentives and reducing the risk of selfish behavior. However, this comes at the cost of increased variance in the learning signal, and in practice the benefits are limited in our setting. The centralized A2C results are obtained for small system sizes using the Stable Baselines implementation \cite{stable-baselines3} with standard hyperparameters. This corresponds to a single agent observing the entire system and making joint decisions for all devices. To simplify the setup, we did not explicitly restrict illegal actions (e.g., processing with negative energy levels), but instead imposed large penalties, which is generally a suboptimal approach. While this method performs reasonably well with 10 devices, it fails to scale: for 20 devices, the average rewards were already about 15 times higher than the normalization factor, making meaningful comparison impossible. Thus, centralized methods serve here primarily as conceptual baselines rather than practical solutions.
\begin{figure}[h]
    \centering
    \includegraphics[width=\linewidth]{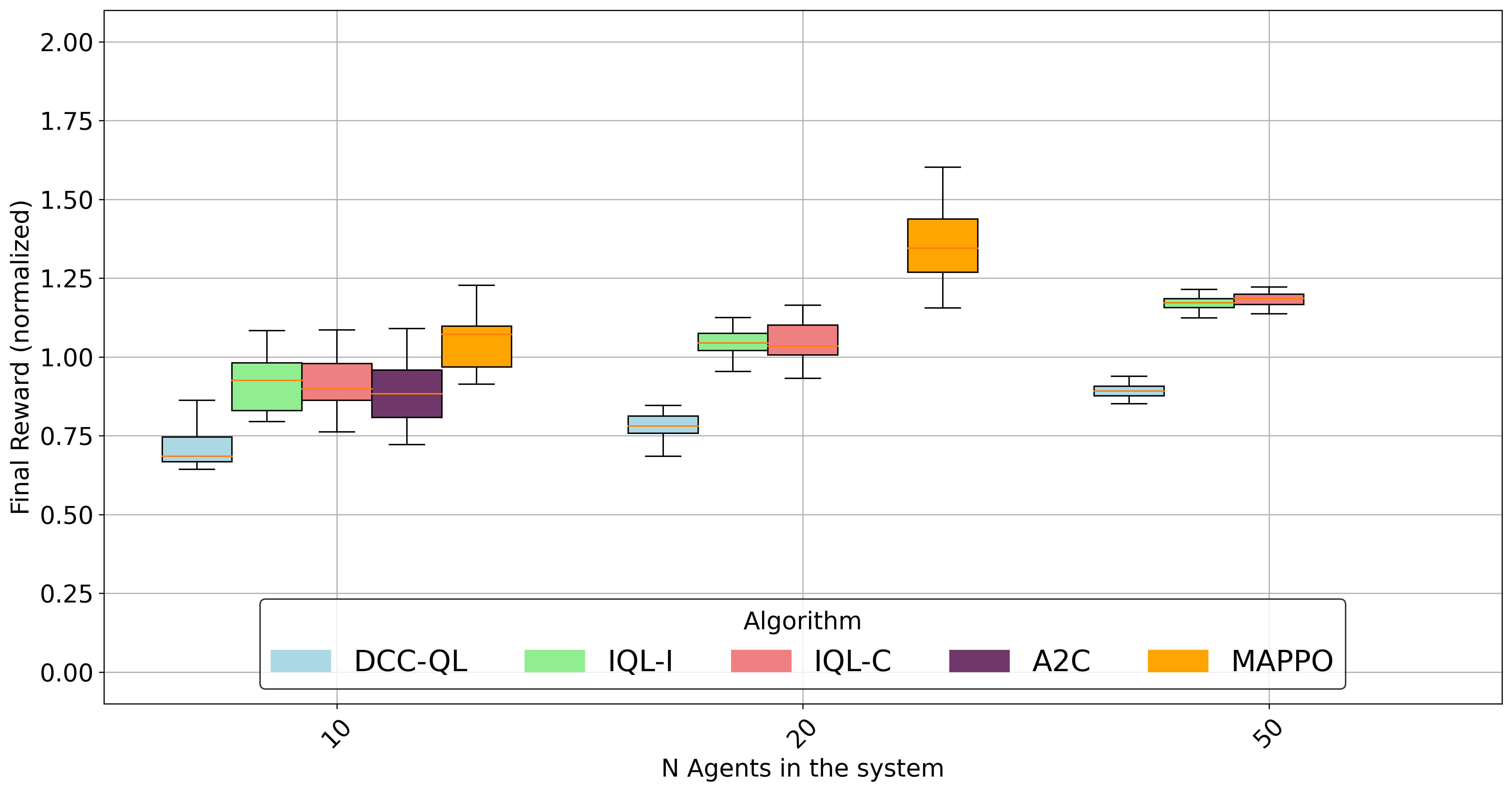}
    \caption{Performance comparison as in \cref{subfig:scalability comparison}, including IQL with common reward and centralized A2C ( only for small system sizes).}
    \label{fig:full comparison scalability}
\end{figure}

\subsection{Verification of the Approximate Reward Bound}
In this experiment, we considered small environments with $10$ devices, where the linear program (LP) can be applied reliably to compute the optimal policy. For each device, we compared the discounted reward obtained using the approximated reward definition in \eqref{eq:approximated reward} with the corresponding component of the true global reward defined in \eqref{eq:global reward}. This comparison allows us to quantify the error introduced by the approximation and to assess its alignment with the theoretical analysis.

To this end, we measured the relative error between the two evaluations and compared it with the theoretical bound established in \cref{lemma:bound approximation reward}. The results confirm that the observed error is consistently within the predicted range, up to small numerical deviations due to stochastic evaluation. Importantly, the special case $\alpha = 1$, where the penalty function is linear and the theoretical bound reduces to zero, was verified exactly in our experiments. For all nonlinear values of $\alpha$, the empirical errors remained strictly below the bound, thereby validating the theoretical result in practice.

These findings provide additional evidence for the robustness of the approximation framework, showing that the theoretical guarantees extend to practical instances solved via the LP in small environments.
\begin{figure}[h]
    \centering
    \begin{subfigure}[t]{0.45\textwidth}
        \includegraphics[width=\linewidth]{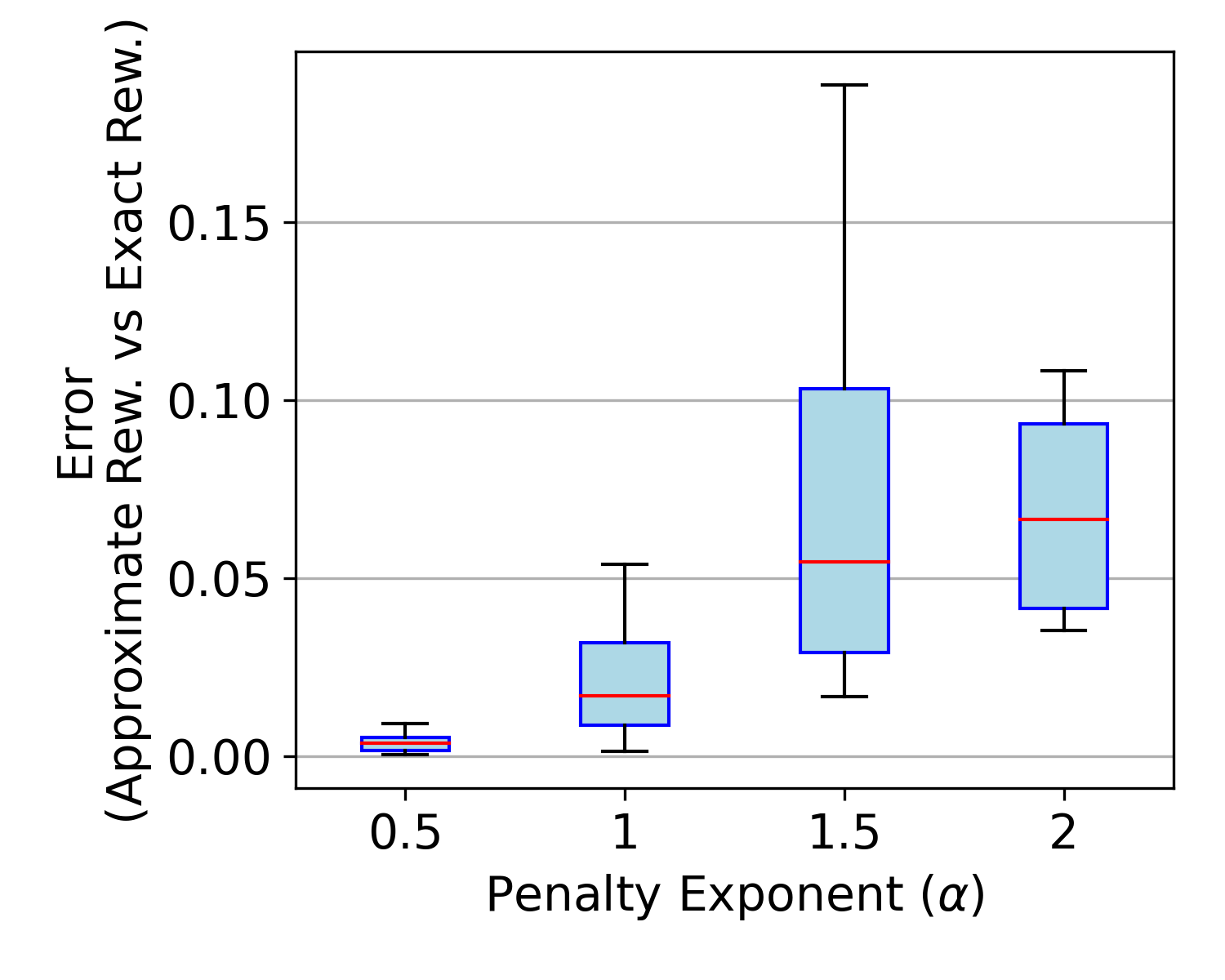}
        \caption{Boxplot of the relative error as a function of the penalty exponent $\alpha$}
        \label{subfig:comparison error boxplot}
    \end{subfigure}
    \hfill
    \begin{subfigure}[t]{0.45\textwidth}
        \includegraphics[width=\linewidth]{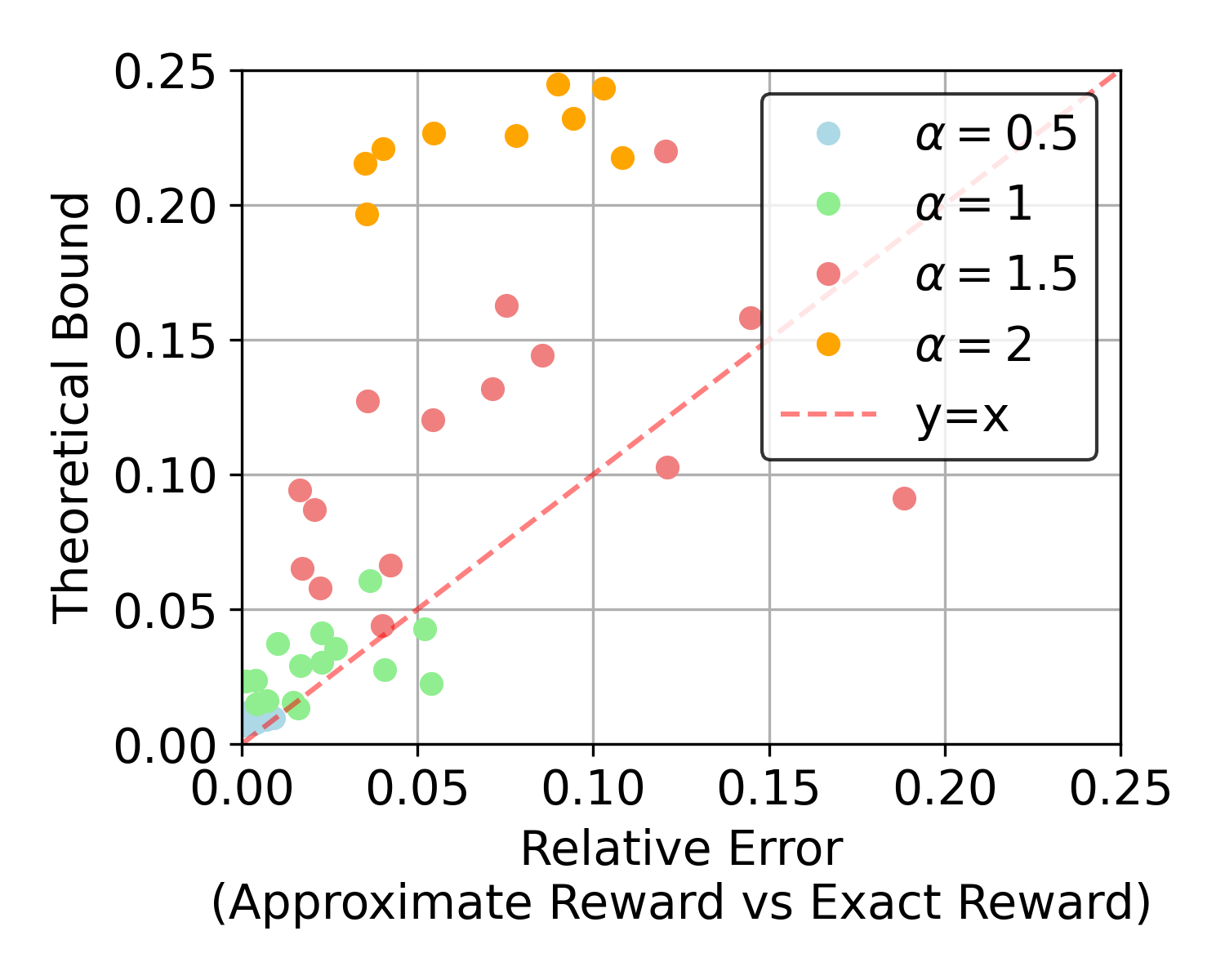}
        \caption{Relative error compared with the theoretical bound from \cref{lemma:bound approximation reward}, showing that the error remains within the predicted range in most cases.}
        \label{subfig:comparison error theoretical bound}
    \end{subfigure}
    \caption{Evaluation of the approximation error between the discounted reward computed with the exact reward and with the approximated reward.}
    \label{fig:comparison error LP}
\end{figure}

\newpage
\section{Extended Algorithm}
\label{secApp:extended algorithm}
The following pseudocode corresponds to the version of the algorithm that uses finite differences to compute the gradient of the objectove function. This is the version used in the nu,erical experiments.
\begin{algorithm}[ht]
\caption{Three-Timescale Decentralized Constrained MARL}
\label{alg:three_timescale_explicit_ext}
\begin{algorithmic}[1]
\State \textbf{Initialize:} Initial constraints $\theta_i^0$, multipliers $\lambda_i^0$, policies $\pi_i^0$
\Function{evaluate\_optimal\_constrained\_policy}{$i, \theta_i, \theta_{-i}, n$}
    \For{$m = 0, 1, \dots, M(n)$} \Comment{Intermediate timescale — $\lambda$ update}
        \For{$t = 0, 1, \dots, T(m,n)$} \Comment{Fastest timescale — policy update}
            \State Collect trajectories using policy $\pi_i^{t,m,n}$
            \State Compute total reward: $r_i^{\text{total}} (s_i, a_i; \theta_{-i}) = \hat{r}_i (s_i, a_i; \theta_{-i}) + \lambda_i c_i (s_i, a_i)$\\
            \State Update policy: $\pi_i^{t+1,m,n} \gets$ RL step on $r_i^{\text{total}}$
        \EndFor
        \State $\hat{K}_i \gets$ $K_i(\pi_i^{\cdot,m,n})$  \Comment{Estimate long term cost of current policy}  
        \State $\lambda_i^{m+1,n} \gets \lambda_i^{m,n} + \alpha_k (\hat{K}_i - \theta_i)$
    \EndFor
    \State \Return $  \hat{J}_i(\theta_i, \theta_{-i}) \gets$ evaluate local policy $\pi_i^{T(M(n), n), M(n)}$
\EndFunction
\\\\
\For{$n = 0, 1, 2, \dots$} \Comment{Slowest timescale — constraint update}
    \For{each agent $i = 1, \dots, N$}
        \State $ \hat{J}_i (\theta_i^n, \theta^n_{-i})  \gets$ \Call{evaluate\_optimal\_constrained\_policy}{$i,\theta_i^n, \theta_{-i}^n ,n$}
    \EndFor
    \State $\epsilon \gets$ RandomNoise(0, $\sigma$)
    \For{each agent $i = 1, \dots, N$}
    \State $  \hat{J}_i (\theta_i^n + \epsilon_i, \theta^n_{-i}) \gets$\Call{evaluate\_optimal\_constrained\_policy}{$i, \theta_i^n + \epsilon_i, \theta_{-i}, n$}
    \State $ \hat{J} (\theta_i^n, \theta^n_{-i} + \epsilon_i) \gets$\Call{evaluate\_optimal\_constrained\_policy}{$i, \theta_i^n, \theta_{-i} + \epsilon_i, n$}
    \EndFor
    \For{each agent $i$}
        \State $ \frac{\partial}{\partial \theta_i}   \hat{J}_i(\theta_i^n, \theta_{-i}^n) \gets \rp{ \hat{J}_i(\theta_i^n + \epsilon_i, \theta_{-i}^n) -  \hat{J}_i(\theta_i^n, \theta_{-i}^n)}/\epsilon_i$
        \State $ \frac{\partial}{\partial \theta_{-i}}  \hat{J}_i (\theta_i^n, \theta_{-i}^n) \gets \rp{ \hat{J}_i(\theta_i^n, \theta_{-i}^n + \epsilon_i) -  \hat{J}_i(\theta_i^n, \theta_{-i}^n)}/\epsilon_i$
        \State $\nabla \frac{\partial  }{\partial \theta_i} \hat{J}(\theta) = \frac{\partial \hat{J}_i (\theta)}{\partial \theta_i}  + \sum_{j \neq i} \frac{\partial \ \hat{J}_j(\theta)}{\partial \theta_{-j}}$
        \State $\theta_i^{n+1} \gets \theta_i^n + \eta_n \nabla_{\theta_i}  \hat{J}_i (\theta_i^n)$
    \EndFor
\EndFor
\end{algorithmic}
\end{algorithm}

\newpage
\section{Extended Algorithm with exact evaluations of $\boldsymbol{\lambda_i^\star}$}
The following pseudocode corresponds to the version of the algorithm that assumes access to the exact values of $\lambda_i^\star$, obtained through the learning process, for computing the gradient.
\begin{algorithm}[ht]
\caption{DCC Framework}
\label{alg:three_timescale_explicit_ext}
\begin{algorithmic}[1]
\State \textbf{Initialize:} Initial constraints $\theta_i^0$, multipliers $\lambda_i^0$, policies $\pi_i^0$
\Function{OptimizeLocalCMDP}{$i, \theta, n$}
    \For{$m = 0, 1, \dots, M(n)$} \Comment{Intermediate timescale — $\lambda$ update}
        \For{$t = 0, 1, \dots, T(m,n)$} \Comment{Fastest timescale — policy update}
            \State Collect trajectories using policy $\pi_i^{t,m,n}$
            \State Compute total reward: $r_i^{\text{total}} (s_i, a_i; \theta_{-i}) = \hat{r}_i (s_i, a_i; \theta_{-i}) + \lambda_i c_i (s_i, a_i)$\\
            \State Update policy: $\pi_i^{t+1,m,n} \gets$ RL step on $r_i^{\text{total}}$
        \EndFor
        \State $\hat{K}_i \gets$ $K_i(\pi_i^{\cdot,m,n})$  \Comment{Estimate long term cost of current policy}  
        \State $\lambda_i^{m+1,n} \gets \lambda_i^{m,n} + \alpha_k (\hat{K}_i - \theta_i)$
    \EndFor
    \State $\hat{J}_i(\theta) \gets$ evaluate long term reward of policy $\pi_i^{T(M(n), n), M(n)}$
    \State \Return $\hat{J}_i (\theta), \lambda_i^{M(n), n}, \pi_i^{T(M(n), n), M(n), n}$
\EndFunction
\\\\
\State Compute $\theta_i^{MAX}$ for each agent $i$: $\theta_i^{MAX} \gets K_i(\pi_i^{\text{always offload}})$
\For{$n = 0, 1, 2, \dots$} \Comment{Slowest timescale — constraint update}
    \For{each agent $i = 1, \dots, N$}
    \State $  \hat{J}_i (\theta^n), \lambda_i^\star, \pi_i^\star \gets$\Call{OptimizeLocalCMDP}{$i, \theta^n, n$}
    \EndFor
    \For{each agent $i$}
        \State $ \frac{\partial}{\partial \theta_i}   \hat{J}_i(\theta^n) \gets - \frac{\lambda_i^\star}{\theta_i^{MAX}}$
        \For{each agent $j \neq i$}
        \State $ \frac{\partial}{\partial \theta_{j}}  \hat{J}_i (\theta^n) \gets \theta_i \frac{\partial}{\partial \theta_j} d(1 + \theta_{-i})$
        \EndFor
    \EndFor
    \For{each agent $i= 1, \dots, N$}
        \State $\nabla \frac{\partial  }{\partial \theta_i} \hat{J}(\theta) = \frac{\partial \hat{J}_i (\theta)}{\partial \theta_i}  + \sum_{j \neq i} \frac{\partial \ \hat{J}_j(\theta)}{\partial \theta_i}$
        \State $\theta_i^{n+1} \gets \theta_i^n + \eta_n \nabla_{\theta_i}  \hat{J}_i (\theta_i^n)$
    \EndFor
\EndFor
\end{algorithmic}
\end{algorithm}

\newpage
\section{Numerical experiments hyperparameters}

\paragraph{Parameters of the environment}
To keep a simple system, we considered both harvesting probability and processing cost probability, in an interval respectively $[ min_H, max_H]$ and $[min_C, max_C]$. The following table describes the values among which we sampled the values for each experiment.
\begin{table}[h!]
    \centering
    \begin{tabular}{c|c}
    \textbf{Parameter} & \textbf{Sample set}\\
    \hline
        M & 15  \\
        B &  15 \\
        $min_H$ & \{0, 1\}\\
        $min_H$ & \{1, 2, 3\}\\
        $min_H$ & \{1\}\\
        $max_C$ & \{5, 7, 10\}\\
        $\gamma$ & 0.95
    \end{tabular}
    \label{tab:placeholder}
\end{table}

\subsection{Learning parameters}
Table~\ref{tab:learning_params_QL} reports the learning parameters used in the experiments for IQL and \icAlg. We tested several learning and exploration rates, but the values shown here yielded the best performance. Symbols in parentheses correspond to those used in \cref{alg:three_timescale_explicit_ext}. For \icAlg, the parameters $\alpha_k$, $\sigma$, and $\eta_n$ denote the initial values of sequences that decay according to the convergence conditions of the algorithm; these are not applicable to IQL.

\begin{table}[h!]
    \centering
    \begin{tabular}{c|c|c}
    \textbf{Parameter} & \textbf{\icAlg} & \textbf{IQL}\\
    \hline
        Learning rate Q-learning & 0.5 & 0.05 \\
        Exploration rate Q-learning & 0.05 & 0.05\\ 
        Exploration decaly Q-learning & 0.95 & 0.95 \\
        LM learning rate ($\alpha_k$) & 1 & \\
        Standard deviation constraint noise ($\sigma$) & 0.05 & \\
        Constraint learning rate ($\eta_n$) & 0.25 & 
    \end{tabular}
    \caption{Learning parameters for \icAlg and IQL}
    \label{tab:learning_params_QL}
\end{table}

For MAPPO, we used the implementation in \cite{guo2025socialjax} for the custom environment described in \cref{sec:appendix toy model description}. 
\begin{table}[]
    \centering
    \begin{tabular}{c|c}
        &  MAPPO  \\
\hline
         LR & 0.001  \\
NUM\_ENVS & 8 \\
TOTAL\_TIMESTEPS & 1.6e7 \\
UPDATE\_EPOCHS & 2 \\
NUM\_MINIBATCHES & 64 \\
GAMMA & 0.95 \\
GAE\_LAMBDA & 0.95\\
CLIP\_EPS & 0.2 \\
ENT\_COEF & 0.01 \\
VF\_COEF & 0.5 \\
MAX\_GRAD\_NORM & 0.5 \\
ACTIVATION & relu
    \end{tabular}
    \caption{Learning parameters MAPPO}
    \label{tab:placeholder}
\end{table}{l|c|}

\newpage

\end{document}